\documentclass{article} 
\usepackage{iclr2027_conference,times}

\usepackage{amsmath,amsfonts,bm}

\def\eqref#1{equation~\ref{#1}}

\def\1{\bm{1}}

\DeclareMathAlphabet{\mathsfit}{\encodingdefault}{\sfdefault}{m}{sl}
\SetMathAlphabet{\mathsfit}{bold}{\encodingdefault}{\sfdefault}{bx}{n}

\renewcommand{\eqref}[1]{\textup{(\ref{#1})}}

\usepackage{url}
\usepackage{booktabs}
\usepackage{multirow}
\usepackage{graphicx}
\usepackage{tikz}
\usetikzlibrary{arrows.meta}
\usepackage{wrapfig}
\usepackage{needspace}
\usepackage{caption}
\usepackage{amssymb,amsthm}
\usepackage{dsfont}
\newcommand{\ones}{\mathds{1}_V}
\usepackage{algorithm}
\usepackage{algpseudocode}
\usepackage{enumitem}
\usepackage{tcolorbox}
\newtcolorbox{interpretation}{colback=black!3,colframe=black!45,boxrule=0.4pt,arc=1.5pt,
  left=5pt,right=5pt,top=3pt,bottom=3pt,before skip=6pt,after skip=6pt}
\usepackage{titletoc}
\usepackage{hyperref}
\newtheorem{proposition}{Proposition}[section]
\DeclareMathOperator{\NELBO}{NELBO}
\DeclareMathOperator{\ELBO}{ELBO}
\newcommand{\defeq}{\stackrel{\text{def}}{=}}
\makeatletter
\newcommand{\inlinechoice}[2]{\if@display#1\else\mathchoice{#1}{#2}{#1}{#1}\fi}
\makeatother
\newcommand{\Zhat}{\inlinechoice{\widehat Z_0^i}{\smash[t]{\widehat Z_0^i}\vphantom{Z_0^i}}}
\newcommand{\Ztilde}{\inlinechoice{\widetilde Z_0^i}{\smash[t]{\widetilde Z_0^i}\vphantom{Z_0^i}}}

\title{Binary Discrete Diffusion: Candidate-Set\\ Diffusion for Logic and Planning Tasks}

\author{%
\makebox[\dimexpr\textwidth-2\tabcolsep\relax][c]{\normalsize\begin{tabular}[t]{@{}c@{}}
Rodrigo Casado Noguerales\textsuperscript{1,2,$*$}%
\quad Dimitri von Rütte\textsuperscript{1}%
\quad Bernhard Schölkopf\textsuperscript{1,2,3}\\
Thomas Hofmann\textsuperscript{1}%
\quad Aran Raoufi\textsuperscript{1}\\[3pt]
\mdseries\textsuperscript{1}ETH Zurich%
\quad\textsuperscript{2}Max Planck Institute for Intelligent Systems%
\quad\textsuperscript{3}ELLIS Institute Tübingen%
\end{tabular}}%
}

\defcitealias{gemma2026technicalreport}{Gemma Team, 2026}

\iclrfinalcopy 
\begin{document}

\maketitle
\lhead{Preprint} 

\ificlrfinal
{\renewcommand{\thefootnote}{\fnsymbol{footnote}}\footnotetext[1]{Correspondence to: \href{mailto:rcasado@ethz.ch}{\texttt{rcasado@ethz.ch}}.}}
\fi

\begin{abstract}
  Discrete diffusion models iteratively denoise a noisy sequence of categorical data such as text, enabling any-order generation with self-correction and in-filling capabilities.
  This makes them promising candidates for reasoning, planning, and constrained-generation tasks like coding and math.
  Yet masked and uniform diffusion erase a token's identity in a single corruption event, so their noisy states cannot explicitly represent partially resolved uncertainty in which several alternatives remain plausible.
  We close this gap with \textbf{Binary Discrete Diffusion}, a family of continuous-time discrete-diffusion processes whose noisy state is a set of candidate tokens per position, represented as a binary membership vector.
  Binary diffusion changes candidate sets through additions and removals, allowing information to be lost and recovered gradually while the diffusion state itself represents partial uncertainty.
  On Sudoku-Extreme, Boxoban, and graph coloring, the symmetric formulation of binary diffusion consistently outperforms strong, well-tuned masked and uniform baselines under matched architectures and controlled training, sampling, and tuning budgets, with its largest gains on the hardest instances, while remaining competitive on language modeling (Text8, TinyStories, OpenWebText).
  These results support candidate sets as an effective intermediate representation for iterative reasoning and show that the corruption kernel can materially change the computation learned by the reverse model.
\end{abstract}

\begin{figure}[h]
  \centering
\definecolor{kmasked}{HTML}{6699CC}%
\definecolor{kuniform}{HTML}{EE99AA}%
\definecolor{ksymmetric}{HTML}{994455}%
\definecolor{kabsorbing}{HTML}{004488}%
\newcommand{\hcvertices}[3]{\foreach \n/\x/\y in
    {E/0/-0.84, 1/-0.99/-0.085, 3/0.99/-0.085, 2/0/0.17, 12/-0.99/0.925, 23/0.99/0.925, 13/0/0.67, T/0/1.68}
  \coordinate (#1-\n) at ({#2+\x},{#3+\y});}%
\newcommand{\hctokens}[1]{\foreach \v/\t in {1/A, 2/B, 3/C}
  \node[tok, hcv #1-\v/.try] (#1-\v) at (#1-\v) {\t};}%
\newcommand{\hcsets}[1]{%
  \node[cset, hcv #1-T/.try] (#1-T) at (#1-T) {\{A,B,C\}};
  \node[cset, hcv #1-12/.try] (#1-12) at (#1-12) {\{A,B\}};
  \node[cset, hcv #1-23/.try] (#1-23) at (#1-23) {\{B,C\}};
  \node[cset, hcv #1-13/.try] (#1-13) at (#1-13) {\{A,C\}};}%
\newcommand{\hcskel}[1]{\foreach \u/\v in {T/12, T/23, T/13, 12/1, 12/2, 23/2, 23/3, 13/1, 13/3, 1/E, 2/E, 3/E}
  \draw[skel] (#1-\u) -- (#1-\v);}%
\begin{tikzpicture}[
    tok/.style={circle, draw=black!40, line width=0.4pt, fill=white, minimum size=9.5pt,
      inner sep=0pt, font=\scriptsize},
    cset/.style={draw=black!40, line width=0.4pt, fill=white, rounded corners=1pt,
      minimum height=9pt, inner sep=1.1pt, font=\fontsize{6}{7}\selectfont},
    mask/.style={cset, fill=black!10, font=\scriptsize},
    visited/.style={draw=#1, line width=1.2pt},
    skel/.style={black!15, line width=0.4pt},
    move/.style={line width=0.5pt, opacity=0.45, shorten <=0.6pt, shorten >=0.6pt},
    oneway/.style={move, -{Stealth[length=3pt, width=2.8pt]}},
    twoway/.style={move, {Stealth[length=3pt, width=2.8pt]}-{Stealth[length=3pt, width=2.8pt]}},
    walk/.style={line width=1.5pt, -{Stealth[length=5.6pt, width=5.2pt]},
      shorten <=1pt, shorten >=1pt},
    kname/.style={font=\footnotesize\bfseries, anchor=base, text=#1!75!black},
    head/.style={font=\scriptsize\itshape, anchor=base, text=black!60},
  ]
  \tikzset{hcv s-23/.style={visited=ksymmetric}, hcv s-3/.style={visited=ksymmetric},
    hcv s-13/.style={visited=ksymmetric}, hcv s-1/.style={visited=ksymmetric}}
  \hcvertices{s}{0}{0}
  \hcskel{s}
  \hcsets{s}
  \hctokens{s}
  \foreach \u/\v in {T/12, T/23, T/13, 12/1, 12/2, 23/2, 23/3, 13/1, 13/3}
    \draw[twoway, ksymmetric] (s-\u) -- (s-\v);
  \draw[walk, ksymmetric] (s-23) -- (s-3);
  \draw[walk, ksymmetric] (s-3) -- (s-13);
  \draw[walk, ksymmetric] (s-13) -- (s-1);
  \node[kname=ksymmetric] at (0,2.62) {Binary symmetric};
  \node[head] at (0,2.25) {candidate set, non-absorbing};

  \tikzset{hcv u-1/.style={visited=kuniform}, hcv u-2/.style={visited=kuniform},
    hcv u-3/.style={visited=kuniform}}
  \hcvertices{u}{3.7}{0}
  \hcskel{u}
  \hctokens{u}
  \foreach \u/\v in {1/2, 2/3, 3/1} \draw[twoway, kuniform] (u-\u) -- (u-\v);
  \draw[walk, kuniform] (u-3) -- (u-2);
  \draw[walk, kuniform] (u-2) -- (u-1);
  \node[kname=kuniform] at (3.7,2.62) {Uniform};
  \node[head] at (3.7,2.25) {single token, non-absorbing};

  \tikzset{hcv a-T/.style={visited=kabsorbing}, hcv a-13/.style={visited=kabsorbing},
    hcv a-1/.style={visited=kabsorbing}}
  \hcvertices{a}{7.4}{0}
  \hcskel{a}
  \hcsets{a}
  \hctokens{a}
  \foreach \u/\v in {T/12, T/23, T/13, 12/1, 12/2, 23/2, 23/3, 13/1, 13/3}
    \draw[oneway, kabsorbing] (a-\u) -- (a-\v);
  \draw[walk, kabsorbing] (a-T) -- (a-13);
  \draw[walk, kabsorbing] (a-13) -- (a-1);
  \node[kname=kabsorbing] at (7.4,2.62) {Binary absorbing};
  \node[head] at (7.4,2.25) {candidate set, absorbing};

  \tikzset{hcv m-T/.style={visited=kmasked}, hcv m-1/.style={visited=kmasked}}
  \hcvertices{m}{11.1}{0}
  \hcskel{m}
  \node[mask, hcv m-T/.try] (m-T) at (m-T) {$\mathtt m$};
  \hctokens{m}
  \foreach \v in {1,2,3} \draw[oneway, kmasked] (m-T) -- (m-\v);
  \draw[walk, kmasked] (m-T) -- (m-1);
  \node[kname=kmasked] at (11.1,2.62) {Masked};
  \node[head] at (11.1,2.25) {single token, absorbing};
\end{tikzpicture}%
  \caption{Denoising process of each kernel over the candidate sets of a three-token vocabulary \mbox{\{A,B,C\}}, with its allowed moves faded. Each panel highlights one possible denoising path to A.}
  \label{fig:hypercube}
\end{figure}

\section{Introduction}

Discrete diffusion models generate by iteratively refining a corrupted sequence.
Unlike autoregressive generation, decisions need not follow a fixed left-to-right order and can be revised later, while many positions can be updated in parallel~\citep{schiff2025udlm,sahoo2024mdlm,gong2025diffullama}.
These properties make discrete diffusion attractive for reasoning, planning, and constrained generation~\citep{ye2024beyondautoregression,wewer2025spatialreasoning}.
Yet sample quality at finite inference budgets remains challenging, motivating work on ordering, remasking, planned denoising, and search~\citep{kim2025tokenordering,wang2025remaskingdiscretediffusion,liu2025planneddenoising,bai2026prism}.
We instead focus on a complementary axis: the corruption process itself.
How much can be gained by changing the corruption process, and in particular the intermediate representation on which denoising operates? Logic and planning are useful testbeds: they impose global constraints, admit exact solve-rate evaluation, and scale cleanly in difficulty at manageable computational cost, while sharing structure with coding, mathematics, and biological design~\citep{sahoo2026scaling,avdeyev2023dirichlet,stark2024dirichlet}.
They also expose a mismatch in the dominant masked and uniform kernels~\citep{sahoo2024mdlm,shi2024simplified,schiff2025udlm,sahoo2025diffusionduality}: a position cannot naturally express that several alternatives remain plausible.
A model solving Sudoku, for example, might narrow a cell from $\{2,7,9\}$ to $\{7,9\}$, later reopen $2$ as neighboring decisions are revised, and eventually settle on $7$.

We propose to make such candidate sets themselves the diffusion states.
A clean token is a singleton, while during reverse generation intermediate states can carry several live hypotheses that may be eliminated or restored.
More broadly, the corruption kernel does more than specify how examples are noised: it determines what information survives at intermediate times and how the model's prediction is translated into reverse moves and their rates.
Identical architectures can therefore induce qualitatively different iterative computations under different kernels.
We instantiate this idea with \textbf{Binary Discrete Diffusion}, representing candidate sets by binary membership vectors in $\{0,1\}^V$ (Figure~\ref{fig:hypercube}).
In \textbf{binary symmetric diffusion}, membership bits can switch both on and off, allowing hypotheses to be eliminated and later restored.
In \textbf{binary absorbing diffusion}, corruption only adds candidates, so reverse generation progressively prunes the set toward a singleton.

Although each position has a nominal state space of size $2^V$, every state has only $V$ single-bit neighbors, and all neighboring reverse rates can be computed in $O(V)$ time from the model's usual $V$-way prediction, without enumerating subsets.
The model learns these dynamics from clean solutions without candidate labels, task-specific deduction rules, or symbolic search.
The richer state also enables \textbf{candidate-aware sampling}: we derive exact ancestral updates through a shared clean-token proposal, adapt re-corrupt sampling, and, building on $\Psi$-sampling and confidence-guided remasking~\citep{deschenaux2026psisamplers,wang2025remaskingdiscretediffusion}, develop revision rules that choose both \emph{which positions} to reconsider and \emph{which hypotheses} to reopen.

We evaluate Binary Discrete Diffusion on Sudoku-Extreme~\citep{wang2025hrm}, Boxoban~\citep{guez2018boxoban}, and graph coloring under controlled architectures, training budgets, and sampler tuning, using direct single-attempt solutions without verifier-guided selection.
Across all three tasks, a consistent pattern emerges: \textbf{binary symmetric diffusion outperforms uniform diffusion, and both outperform masked and binary absorbing diffusion}.
The advantage of binary symmetric over uniform grows on harder regimes, including low-clue Sudoku, long-horizon Boxoban, and graph coloring far beyond the training size.
On Text8, TinyStories, and OpenWebText~\citep{mahoney2011text8,eldan2023tinystories,gokaslan2019openwebtext}, binary diffusion remains competitive with established kernels.
These results support candidate sets as an effective intermediate representation for iterative reasoning and show that the corruption process can materially change the computation learned by the reverse model.

\section{Discrete diffusion modeling}
\label{sec:discrete}

We consider a data distribution $p_{\mathrm{data}}$ on sequences of length $L$ over a finite vocabulary $[V]=\{1,\ldots,V\}$.
Starting from $Z_0\sim p_{\mathrm{data}}$, we corrupt its positions independently with a continuous-time Markov chain (CTMC) $Z_t=(Z_t^1,\ldots,Z_t^L)$, whose distribution approaches an easily sampled noise distribution as $t\uparrow1$.
We write $q_{t\mid s}(y\mid z)$ for the one-position transition distribution or \emph{kernel} between times $s<t$ and states $z,y$.
For masked and uniform diffusion we use~\citep{austin2021d3pm}
\begin{equation}
  q_{t\mid0}(y\mid c)=
  \begin{cases}
    (1-t)\mathbf1\{y=c\}+t\mathbf1\{y=\mathtt m\}, & \text{masked},  \\
    (1-t)\mathbf1\{y=c\}+t/V,                      & \text{uniform}.
  \end{cases}
  \label{eq:standard-kernels}
\end{equation}
Thus $Z_t$ can be sampled directly at any time $t$, without simulating the intervening jumps.

We write $Q_t(z,y)$ for the forward jump rate from $z$ to $y$, so that
$q_{t+h\mid t}(y\mid z)=\mathbf1\{y=z\}+hQ_t(z,y)+o(h)$ as $h\downarrow0$.
The reverse rates depend on $p_{\mathrm{data}}$, but conditioning on the clean sample $Z_0$ makes them computable from the forward process~\citep{campbell2022continuous}.
For a change at position $i$ from noisy token $Z_t^i$ to another $y\ne Z_t^i$, the clean-conditioned reverse rates are given by
\begin{equation}
  a_i(Z_0,Z_t,y;t)
  \defeq Q_t(y,Z_t^i)
  \frac{q_{t\mid0}(y\mid Z_0^i)}
  {q_{t\mid0}(Z_t^i\mid Z_0^i)}.
  \label{eq:discrete-conditioned-rate}
\end{equation}
The true reverse rates are obtained by averaging these clean-conditioned rates over $q(Z_0\mid Z_t)$.
Let $a_i^\theta(Z_t,y;t)$ denote learned reverse rates that depend on the noisy sequence $Z_t$ and time $t$ but not on $Z_0$.
We train them against the clean-conditioned targets by minimizing the negative of the continuous-time evidence lower bound (ELBO)~\citep{campbell2022continuous,casado2026discretediffusionlearn}:
\begin{equation}
  \begin{gathered}
    -\ELBO(\theta)
    =\int_0^1
    \mathbb E_{Z_0,Z_t}
      [\mathcal J_t(\theta;Z_0,Z_t)]\,dt,
    \\[2pt]
    \mathcal J_t(\theta;Z_0,Z_t)
    =\sum_{i=1}^L\sum_{y\ne Z_t^i}
    \Phi(a_i,a_i^\theta),
    \qquad
    \Phi(a,b)=a\log\frac ab-a+b.
  \end{gathered}
  \label{eq:binary-elbo}
\end{equation}
This upper-bounds the expected negative log-likelihood.
More sharply, its excess above the data-entropy floor is exactly the path-space KL between the exact reverse process and the learned reverse process~\citep{casado2026discretediffusionlearn}.
In practice, the jump rates can become singular near $t=0$ or $t=1$, so we train on an interior window $0<t_1<t_2<1$.
The corresponding finite-window objective includes reconstruction and terminal-prior boundary terms, given in App.~\ref{app:finite-window-nelbo}.

\paragraph{Training.}
We parameterize the model reverse rates through a network that takes $(Z_t,t)$ as input and outputs, at each position $i$, a strictly positive distribution $\mu_i^\theta(c\mid Z_t;t)$ over clean tokens \mbox{$c\in[V]$}.
This induces a noisy-state distribution $m_i^\theta$, which determines the model reverse rates for $y\ne Z_t^i$:
\begin{equation}
  a_i^\theta(Z_t,y;t)
  \defeq Q_t(y,Z_t^i)
  \frac{m_i^\theta(y\mid Z_t;t)}
  {m_i^\theta(Z_t^i\mid Z_t;t)},
  \qquad
  m_i^\theta(y\mid Z_t;t)\defeq\sum_{c=1}^V\mu_i^\theta(c\mid Z_t;t)\,q_{t\mid0}(y\mid c).
  \label{eq:binary-rate-parameterization}
\end{equation}
Thus one $V$-way network output determines all allowed reverse moves at a position.
Reweighting $\mu_i^\theta$ by the kernel $q_{t\mid0}(Z_t^i\mid c)$ gives a new distribution which we denote by
\begin{equation}
  \pi_i^\theta(c\mid Z_t;t)
  \defeq\frac{\mu_i^\theta(c\mid Z_t;t)\,q_{t\mid0}(Z_t^i\mid c)}
  {m_i^\theta(Z_t^i\mid Z_t;t)}.
  \label{eq:binary-clean-distribution}
\end{equation}
In terms of this distribution, the model rate $a_i^\theta(Z_t,y;t)$ from~\eqref{eq:binary-rate-parameterization} becomes an average of the clean-conditioned rate $a_i(Z_0,Z_t,y;t)$ from~\eqref{eq:discrete-conditioned-rate}, where tokens with $\pi_i^\theta(c\mid Z_t;t)=0$ contribute zero:
\begin{equation}
  a_i^\theta(Z_t,y;t)
  =Q_t(y,Z_t^i)\sum_{c=1}^V\pi_i^\theta(c\mid Z_t;t)\,\frac{q_{t\mid0}(y\mid c)}{q_{t\mid0}(Z_t^i\mid c)}
  =\mathbb E_{Z_0^i\sim\pi_i^\theta(\cdot\mid Z_t;t)}\big[a_i(Z_0,Z_t,y;t)\big].
  \label{eq:binary-denoiser-average}
\end{equation}
Training then minimizes an empirical estimate of the path integral in~\eqref{eq:binary-elbo} by sampling $(t,Z_0,Z_t)$.

\refstepcounter{proposition}
\paragraph{Remark \theproposition .}
\label{rem:cavity}
This is the \emph{cavity parameterization}~\citep{casado2026discretediffusionlearn}.
Since the true reverse rates take the averaged form~\eqref{eq:binary-denoiser-average} with the posterior $q(Z_0^i\mid Z_t)$ in place of $\pi_i^\theta$, the model rates equal the true rates, and thus minimize the negative ELBO, when $\pi_i^\theta(c\mid Z_t;t)=q(Z_0^i=c\mid Z_t)$.
This is attained by setting $\mu_i^\theta(c\mid Z_t;t)=q(Z_0^i=c\mid Z_t^{-i})$, the distribution of the clean token given the rest of the sequence $Z_t^{-i}$ (that is, $Z_t$ excluding $Z_t^i$), even though the network is evaluated on the full sequence $Z_t$.
We call $\pi_i^\theta$ the \emph{denoiser} and $\mu_i^\theta$ the \emph{cavity} predictions or distributions.

\section{Binary discrete diffusion}
\label{sec:binary}

In binary diffusion, the noisy state at each position is a \emph{set of candidate tokens}.
We represent a clean token $c\in[V]$ by the singleton $\{c\}$, and corruption changes this set by adding or removing candidates, so that an intermediate state can retain several alternatives at a single position.
Representing each set by its binary membership vector identifies the noisy state $Z_t^i$ with a vertex of the binary cube $\{0,1\}^V$, and we use the same symbol for a noisy set $z$ and its membership vector, including for a clean singleton $\{c\}$ and its one-hot vector.
We consider two corruption processes on this space: \textbf{binary symmetric diffusion}, under which candidates can be both removed and restored, and \textbf{binary absorbing diffusion}, which only adds candidates.
We focus primarily on the symmetric process and treat the absorbing process as a simpler monotone counterpart.

Although the bits evolve independently when conditioned on the clean sequence $Z_0$, the true and learned reverse dynamics are generally correlated across positions and depend on the full noisy sequence.
Crucially, though, the network still produces only the usual $V$-way distribution $\mu_i^\theta(c\mid Z_t;t)$ over clean tokens at each position, and the construction of Section~\ref{sec:discrete} converts this single prediction into all allowed reverse rates on the binary cube.
This is what makes candidate-set diffusion tractable, as we explain in more detail in Section~\ref{sec:binary-embedding}.

\subsection{Binary symmetric diffusion}
\label{sec:binary-symmetric}

In binary symmetric diffusion, each membership bit retains its clean value with probability $1-t$ and is otherwise resampled uniformly from $\{0,1\}$.
A resampled bit keeps its value half of the time, so a bit differs from its clean value with probability $t/2$, and given $Z_0$ the forward process factorizes over positions and bits with the one-position kernel
\begin{equation}
  q_{t\mid0}^{\mathrm{sym}}(z\mid c)
  =\Big(1-\frac t2\Big)^{V-d_H(z,c)}\Big(\frac t2\Big)^{d_H(z,c)},
  \label{eq:binary-symmetric-clean-kernel}
\end{equation}
where $d_H$ is the Hamming distance.
A state at time $t$ thus contains the clean token $c$ with probability $1-t/2$ and every other token independently with probability $t/2$, so it holds $(V-1)t/2$ false candidates on average, and it approaches independent fair bits, a uniformly random subset of the vocabulary, as $t\uparrow1$.
More generally, between times $0\le s<t<1$ each bit is resampled with probability $(t-s)/(1-s)$, which gives the transition kernel
\begin{equation}
  q_{t\mid s}^{\mathrm{sym}}(y\mid z)
  =(1-\varepsilon_{s,t})^{V-d_H(y,z)}\,\varepsilon_{s,t}^{d_H(y,z)},
  \qquad
  \varepsilon_{s,t}\defeq\frac{t-s}{2(1-s)}.
  \label{eq:binary-symmetric-kernel}
\end{equation}

\paragraph{Forward rates.}
Each bit is resampled at rate $1/(1-t)$ and changes its value half of the time, so differentiating~\eqref{eq:binary-symmetric-kernel} gives the flip rate
\begin{equation}
  Q_t^{\mathrm{sym}}\big(z,z^{(u)}\big)=r_t\defeq\frac1{2(1-t)},
  \label{eq:binary-symmetric-forward-rates}
\end{equation}
where $z^{(u)}$ is the state $z$ with a single bit $u$ flipped, and every other jump has rate zero.
The forward process thus moves along the edges of the binary cube: from $z$, it can only reach the $V$ Hamming neighbors $z^{(1)},\ldots,z^{(V)}$ of $z$ in $\{0,1\}^V$.

\paragraph{Clean-conditioned reverse rates.}
The reverse process traverses the same edges, so each move flips one bit.
We write $a_{i,u}^{\mathrm{sym}}$ for the rate of flipping bit $u$ of $Z_t^i$.
Since the forward kernel~\eqref{eq:binary-symmetric-clean-kernel} factorizes over bits, all unchanged-bit factors cancel in the kernel ratio in~\eqref{eq:discrete-conditioned-rate}.
The remaining ratio is $\rho_t=(1-t/2)/(t/2)=(2-t)/t$ when the flip repairs a mismatch with the clean bit $(Z_0^i)_u$, and $\rho_t^{-1}$ when it creates one.
With the forward rate $r_t$ introduced above, the clean-conditioned rate is therefore
\begin{equation}
  a_{i,u}^{\mathrm{sym}}(Z_0,Z_t;t)
  =r_t\big[
    \rho_t\,\mathbf 1\{(Z_t^i)_u\ne(Z_0^i)_u\}
    +\rho_t^{-1}\,\mathbf 1\{(Z_t^i)_u=(Z_0^i)_u\}
    \big].
  \label{eq:binary-symmetric-odds}
\end{equation}
\textbf{A repairing flip is therefore favored over a corrupting flip by a factor $\rho_t^2$.}
This preference strengthens as $t\downarrow0$ and vanishes as $t\uparrow1$, while both directions remain possible for $0<t<1$.

\paragraph{Model reverse rates.}
Averaging over the denoiser as in~\eqref{eq:binary-denoiser-average} replaces each indicator in~\eqref{eq:binary-symmetric-odds} by its probability, so that, with $\mathbb P_i^\theta$ the probability under $Z_0^i\sim\pi_i^\theta(\cdot\mid Z_t;t)$, the model reverse rates are
\begin{equation}
  a_{i,u}^{\theta,\mathrm{sym}}(Z_t;t)
  =r_t\big[\rho_t\,\mathbb P_i^\theta\{(Z_t^i)_u\ne(Z_0^i)_u\}+\rho_t^{-1}\,\mathbb P_i^\theta\{(Z_t^i)_u=(Z_0^i)_u\}\big].
  \label{eq:binary-symmetric-rates}
\end{equation}
The bit of an active candidate $u$ is wrong unless $u$ is the clean token, and the bit of an inactive token is wrong only if it is the clean token, so removing and restoring a candidate have model reverse rates
\begin{equation}
  a_{i,u}^{\theta,\mathrm{sym}}(Z_t;t)
  =
  \begin{cases}
    r_t\big[\rho_t\,(1-\pi_i^\theta(u\mid Z_t;t))+\rho_t^{-1}\,\pi_i^\theta(u\mid Z_t;t)\big], & u\in Z_t^i,    \\
    r_t\big[\rho_t\,\pi_i^\theta(u\mid Z_t;t)+\rho_t^{-1}\,(1-\pi_i^\theta(u\mid Z_t;t))\big], & u\notin Z_t^i,
  \end{cases}
  \label{eq:binary-symmetric-removal}
\end{equation}
where the denoiser $\pi_i^\theta(\cdot\mid Z_t;t)$ is the cavity $\mu_i^\theta(\cdot\mid Z_t;t)$ multiplied by $\rho_t^2$ on the active candidates of $Z_t^i$ and renormalized.

\begin{interpretation}
  \textbf{Interpretation.}
  Each rate lies between the slow rate $r_t/\rho_t$ and the fast rate $r_t\rho_t$ of~\eqref{eq:binary-symmetric-odds}.
  \textbf{Candidates assigned higher probability by the denoiser are harder to remove when present and faster to restore when absent.}
\end{interpretation}

\subsection{Binary absorbing diffusion}
\label{sec:binary-absorbing}

Binary absorbing diffusion only adds candidates: between times $0\le s<t<1$, each inactive bit activates independently with probability $(t-s)/(1-s)$, and active bits never flip.
Its kernel and forward rates are therefore
\begin{equation}
  q_{t\mid s}^{\mathrm{abs}}(y\mid z)
  =\mathbf 1\{z\subseteq y\}
  \Big(\frac{1-t}{1-s}\Big)^{V-|y|}
  \Big(\frac{t-s}{1-s}\Big)^{|y|-|z|},
  \qquad
  Q_t^{\mathrm{abs}}\big(z,z^{(u)}\big)=\frac{\mathbf 1\{u\notin z\}}{1-t},
  \label{eq:binary-absorbing-transition}
\end{equation}
and every other jump has rate zero.
In particular, $q_{t\mid0}^{\mathrm{abs}}(z\mid c)=\mathbf 1\{c\in z\}(1-t)^{V-|z|}t^{|z|-1}$: the clean token $c$ always remains a candidate and every other token becomes one independently with probability $t$, so the state holds $(V-1)t$ false candidates on average and approaches the full vocabulary as $t\uparrow1$.

Since the forward process only adds candidates, the reverse process only removes them.
Removing a false candidate gives the kernel ratio $(1-t)/t$ in the clean-conditioned rate~\eqref{eq:discrete-conditioned-rate}, while the clean token cannot be removed, so this rate and its average~\eqref{eq:binary-denoiser-average} over the denoiser $\pi_i^\theta$ are
\begin{equation}
  a_{i,u}^{\mathrm{abs}}(Z_0,Z_t;t)
  =\frac{\mathbf 1\{u\in Z_t^i\setminus Z_0^i\}}{t},
  \qquad
  a_{i,u}^{\theta,\mathrm{abs}}(Z_t;t)
  =\frac{\mathbf 1\{u\in Z_t^i\}}{t}\big(1-\pi_i^\theta(u\mid Z_t;t)\big),
  \label{eq:binary-absorbing-rate}
\end{equation}
where $\pi_i^\theta(\cdot\mid Z_t;t)$ is the renormalized restriction of $\mu_i^\theta(\cdot\mid Z_t;t)$ to the active candidates of $Z_t^i$.

\begin{interpretation}
  \textbf{Interpretation.}
  A candidate receiving a larger share of the model probability within the current candidate set has a smaller removal rate.
  \textbf{Denoising therefore prunes unlikely candidates while protecting plausible ones.}
\end{interpretation}

\subsection{Efficient computation and embedding with candidate sets}
\label{sec:binary-embedding}

Candidate-set diffusion has $2^V$ possible states per position, which a priori require work exponential in $V$ for computing the model rates and the ELBO loss.
However, parameterizing in terms of a clean-data distribution allows us to work in \(O(V)\) per position, without enumerating the exponentially many possible candidate sets.
For the rates, every reverse jump flips a single bit, so the reverse process can only move from a state $z$ to its $V$ Hamming neighbors $z^{(u)}$.
After one network evaluation, the denoiser $\pi_i^\theta$ of a position follows from $\mu_i^\theta$ by reweighting and normalization in $O(V)$ time, and each of the $V$ flip rates $a_{i,u}^\theta$,~\eqref{eq:binary-symmetric-removal} or~\eqref{eq:binary-absorbing-rate}, then takes constant work.
For the same reason, the path loss in~\eqref{eq:binary-elbo} sums only over these $V$ moves per position $i$, $\mathcal J_t=\smash[t]{\sum_i\sum_{u=1}^V\Phi(a_{i,u},a_{i,u}^\theta)}$. App.~\ref{app:binary} derives all rates and losses in full detail.

Similarly, to avoid materializing an embedding table over the $2^V$ states, we embed a candidate set by applying the input embedding matrix $E\in\mathbb R^{V\times d}$, whose row $E_c$ embeds clean token $c\in [V]$, to its $\ell_2$-normalized membership vector,
\begin{equation}
  e(z)=E^\top\frac{z}{\|z\|_2}=\frac{1}{\sqrt{|z|}}\sum_{c\in z}E_c,
  \qquad e(\emptyset)=0.
  \label{eq:binary-embedding}
\end{equation}
This costs $O(V)$ and reduces to the ordinary token embedding $E_c$ for a singleton $\{c\}$.
Apart from this input layer, we therefore use the same architecture as for the masked and uniform baselines.

\subsection{Information loss and the time schedule}
\label{sec:binary-schedule}

\begin{figure}[t]
  \centering
  \includegraphics[width=\linewidth]{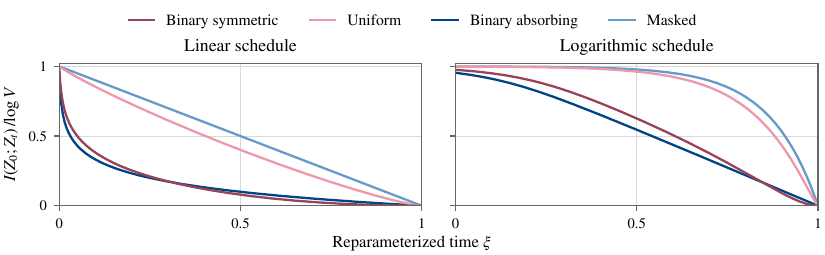}
  \caption{Fraction of the information retained for independent, uniformly distributed tokens with $V=1024$, against the reparameterized time $\xi$ of the linear (left) and logarithmic (right) schedules.}
  \label{fig:information}
\end{figure}

In binary diffusion, the first false candidate doubles the number of candidates for the clean token, whereas each later one enlarges an already large set.
Since false candidates appear at a rate proportional to $V$, the binary kernels thus erase the clean token much faster than masked and uniform diffusion at large vocabularies.

We quantify the retained information by the mutual information
$I(Z_0^i;Z_t^i)=H(Z_0^i)-H(Z_0^i\mid Z_t^i)$,
the reduction in uncertainty about the clean token from observing its noisy state, where $H$ denotes Shannon entropy. Since $I(Z_0^i;Z_t^i)$ is intractable in general, we do so on a reference distribution of independent, uniformly distributed tokens (App.~\ref{app:information}).
Masked and uniform diffusion then lose information roughly linearly in $t$, whereas the binary kernels lose it roughly logarithmically, so that a uniform sample of $t$ places most training times where little information is left (Figure~\ref{fig:information}, left).

This reference calculation motivates a change of clock~\citep{casado2026discretediffusionlearn}:  a time reparameterization $\xi\mapsto t$, with $\xi$ drawn uniformly from $[0,1]$, to control how fast information is destroyed.
Reweighting the path integrand $\mathcal J_t$ by $dt/d\xi$ then leaves the ELBO objective~\eqref{eq:binary-elbo} unchanged, so that only the allocation of training times changes.
With a logarithmic map instead of the linear one,
\begin{equation}
  t_{\mathrm{lin}}(\xi)=t_1+(t_2-t_1)\,\xi,
  \qquad
  t_{\log}(\xi)=t_1\Big(\frac{t_2}{t_1}\Big)^{\xi},
  \label{eq:binary-schedules}
\end{equation}
the binary kernels lose their information approximately linearly in $\xi$ (Figure~\ref{fig:information}, right).
We also give $\xi$ to the network as its time input rather than $t$.
For $V\ge1024$, the logarithmic schedule substantially improved optimization.
For the smaller vocabularies ($V\le 9$) of our logic tasks, this choice made no difference and all kernels trained successfully with the linear schedule.
We further refine all time schedules with an adaptive time sampler (see Apps.~\ref{app:binary-clocks} and~\ref{app:adaptive-time-sampling} for the details).

\section{Sampling}
\label{sec:binary-sampling}

Generation runs the learned reverse process on a grid $t_2=\tau_N>\cdots>\tau_0=t_1$, starting from a prior distribution $p_{t_2}$ and ending with a final decoding step via the denoiser $\pi_i^\theta(\cdot\mid Z_{t_1};t_1)$.
As priors we take the limiting forward distributions, a uniformly random subset of the vocabulary for binary symmetric and the full vocabulary, given by the all-ones vector $\ones$, for binary absorbing:
\begin{equation}
  p_{t_2}^{\mathrm{sym}}(z)\defeq2^{-V},
  \qquad
  p_{t_2}^{\mathrm{abs}}(z)\defeq\mathbf 1\{z=\ones\}.
  \label{eq:binary-priors}
\end{equation}

\paragraph{Ancestral sampling.}
Ancestral sampling from a time $t$ to the next time $s<t$ of the grid follows the reverse process learned through the ELBO, so that as $t\downarrow s$, its jump rates approach the model rates~\eqref{eq:binary-denoiser-average}.
At each step, it updates the positions independently given $Z_t$ by averaging the \emph{bridge} $q_{s\mid t,0}(y\mid z,c)$, the distribution of $Z_s^i=y$ given $Z_t^i=z$ and $Z_0^i=c$, over the denoiser:
\begin{equation}
  p_{s\mid t,i}^\theta(y\mid Z_t)
  =\sum_{c=1}^V\pi_i^\theta(c\mid Z_t;t)\,q_{s\mid t,0}(y\mid Z_t^i,c).
  \label{eq:binary-ancestral}
\end{equation}
As in Section~\ref{sec:binary-embedding}, we avoid enumerating the $2^V$ states: given the clean token, the forward process corrupts the bits independently, so the bridge factors over bits,
\begin{equation}
  q_{s\mid t,0}(y\mid z,c)=\prod_{u=1}^V q_{s\mid t,0}\big(y_u\mid z_u,\mathbf 1\{u=c\}\big).
  \label{eq:binary-bit-bridge}
\end{equation}
Thus, each bit's factor accounts for every sequence of jumps between $s$ and $t$, and a step can change many bits without simulating jumps or Poisson counts.
The mixture~\eqref{eq:binary-ancestral} does not factor over bits, but we can still sample it exactly at a cost of $O(V)$ per position without enumerating the $2^V$ possible states: we draw a clean proposal $\Zhat{}\sim\pi_i^\theta(\cdot\mid Z_t;t)$ and then sample each bit independently from~\eqref{eq:binary-bit-bridge} with $c=\Zhat{}$ (App.~\ref{app:binary-sampling}).
Under the symmetric bridge, each bit favors its current value for $s$ close to $t$ and the proposal's value as $s\downarrow0$, so removed candidates can return, whereas the absorbing bridge always retains the clean proposal and independently retains each other active candidate with probability $s/t$.

\paragraph{Revision by re-corruption.}
Ancestral sampling conditions each step on the current state $Z_t$, which biases it toward what earlier steps have decided, including their mistakes.
\emph{Re-corrupt} sampling instead forgets the current state: it averages the kernel $q_{s\mid0}$ rather than the bridge $q_{s\mid t,0}$,
\begin{equation}
  p_{s\mid t,i}^{\theta,\mathrm{re\text{-}corr}}(y\mid Z_t)
  =\sum_{c=1}^V\pi_i^\theta(c\mid Z_t;t)\,q_{s\mid0}(y\mid c),
  \label{eq:binary-recorrupt}
\end{equation}
which we again sample in two steps by drawing a clean proposal $\Ztilde{}\sim\pi_i^\theta(\cdot\mid Z_t;t)$ and then corrupting it afresh to noise level $s$.
With an exact denoiser, both updates preserve the per-position forward marginals, but re-corruption makes every position revisable at every step.
Interestingly, re-corruption alone markedly improves solve rates on logic tasks (Table~\ref{tab:logic-sampling}, App.~\ref{app:sampler-tuning}).

\paragraph{Confidence-guided $\Psi$-sampler.}
As in the $\Psi$-sampler of~\citet{deschenaux2026psisamplers}, we mix the ancestral and re-corrupt updates, but independently at each position $i$ with weight $\kappa_i$:
\begin{equation}
  \Zhat{},\Ztilde{}\sim\pi_i^\theta(\cdot\mid Z_t;t),\qquad
  Z_s^i\sim(1-\kappa_i)\,q_{s\mid t,0}(\cdot\mid Z_t^i,\Zhat{})
  +\kappa_i\,q_{s\mid0}(\cdot\mid\Ztilde{}),
  \label{eq:binary-psi-sampler}
\end{equation}
where we draw the two proposals independently and where $\kappa_i$ may depend on the re-corrupted proposal $\Ztilde{}$.
A constant $\kappa_i=\eta$ interpolates from ancestral sampling at $\eta=0$ to re-corrupt sampling at $\eta=1$ and preserves the per-position marginals, since both updates do.
For masked diffusion, this mixture is the remasking sampler ReMDM~\citep{wang2025remaskingdiscretediffusion}.
We extend it with two confidence mechanisms, position confidence with strengths $\gamma$ and $\lambda$ and candidate confidence with strength $\beta$, which recover the plain $\Psi$-sampler at zero (Algorithm~\ref{alg:binary-sampling}; full details specified in App.~\ref{app:psi-sampler}).
\begin{itemize}[nosep,leftmargin=*]
  \item \textbf{Position confidence: where to reconsider.}
        Following ReMDM-conf, we use the model's confidence in its clean proposal $\Ztilde{}$, read from the cavity or the denoiser, to concentrate revision on uncertain positions.
        We let $\kappa_i$ decrease with confidence at strength $\gamma$, keeping its mean equal to $\eta$ across positions.
        We also tilt the noise level $s_i$ to which re-corrupted positions return with the same confidence at strength $\lambda$, keeping the mean noise level at $s$.
  \item \textbf{Candidate confidence: which candidates to reconsider.}
        Plain re-corruption reopens alternative candidates uniformly at random, as the forward kernel does.
        For binary diffusion, we instead reopen candidate $u$ with probability proportional to $p_{i,u}^\beta$, keeping the expected number of reopened candidates fixed, where $p_{i,u}$ is the denoiser probability of $u$ when it was last in the candidate set.
        Uniform diffusion analogously draws the new token $u\ne\Ztilde{}$ in proportion to $\pi_i^\theta(u\mid Z_t;t)^\beta$, while masked diffusion has no candidate to reopen.
        Plausible candidates are thus preferentially restored.
\end{itemize}

\begin{algorithm}[hbt]
  \caption{Confidence-guided $\Psi$-sampler (full details specified in App.~\ref{app:psi-sampler}).}
  \label{alg:binary-sampling}
  \begin{algorithmic}[1]
    \Require Cavity model, kernel, grid $t_2=\tau_N>\cdots>\tau_0=t_1$, re-corrupt fraction $\eta$, strengths $\gamma,\lambda,\beta$, temperature $T$
    \State Draw $Z_{t_2}^i\sim p_{t_2}$ and set $p_{i,u}=1/V$ for all $i,u$.
    \For{$j=N,\ldots,1$}
    \State Set $t=\tau_j$, $s=\tau_{j-1}$, evaluate $\mu_i^\theta(\cdot\mid Z_t;t)$, compute $\pi_i^\theta$ by~\eqref{eq:binary-clean-distribution}, and record $p_{i,u}={\pi_i^\theta(u\mid Z_t;t)}$ for $u\in Z_t^i$ (every $u$ for uniform diffusion).
    \State Draw a proposal $\Ztilde{}\propto\pi_i^\theta(\cdot\mid Z_t;t)^{1/T}$ and read its confidence $\mathrm{conf}_i$ from $\mu_i^\theta$ or $\pi_i^\theta$.
    \State Set $\kappa_i\propto\exp(-\gamma\,\mathrm{conf}_i)$ and $s_i\propto\exp(-\lambda\,\mathrm{conf}_i)$, capped at one, with means $\eta$ and $s$ over positions (using equations~\eqref{eq:app-position-confidence} and~\eqref{eq:app-ordering-tilt}, App.~\ref{app:psi-sampler}).
    \State With probability $\kappa_i$, re-corrupt $\Ztilde{}$ to $s_i$, reopening candidates $u$ with probability~$\propto p_{i,u}^\beta$, and otherwise draw $Z_s^i$ from the ancestral update~\eqref{eq:binary-ancestral} with its own proposal $\Zhat{}$.
    \EndFor
    \State Sample each position $\propto\pi_i^\theta(\cdot\mid Z_{t_1};t_1)^{1/T}$, only the masked ones for masked diffusion.
  \end{algorithmic}
\end{algorithm}

\section{Experiments}

We focus on solve rate and sample quality and report negative ELBOs only for completeness: the negative ELBO measures the path-space KL to the exact reverse process~\citep{casado2026discretediffusionlearn}, but it need not track the quality of the final samples, especially under non-ancestral samplers, as uniform versus masked diffusion shows~\citep{schiff2025udlm,sahoo2025diffusionduality}.

\subsection{Logic and planning tasks}

We evaluate on Sudoku-Extreme~\citep{wang2025hrm},
Boxoban~\citep{guez2018boxoban,garrigaalonso2024sokoban},
and four-coloring of random graphs, using matched 43M-parameter diffusion transformers and five training seeds per kernel and task.
Besides the full test sets, we report hard subsets to de-saturate the benchmarks: low-clue Sudoku, long-solution Boxoban, and graphs ten times the training size.
A sample counts as solved only if it satisfies every constraint, with one attempt per instance and no verifier.
For the full and hard settings separately, we select each kernel's sampler on validation at a reference budget of 256 network evaluations (NFE) for Sudoku and Boxoban and 64 for graph coloring, with the same tuning budget for every kernel and candidate confidence for all but masked diffusion.
Parentheses below are standard errors over the five training seeds.
Full experimental details are given in App.~\ref{app:experimental-details}, and the test negative ELBOs in Table~\ref{tab:logic-test-nelbo}.

\begin{table}[hb]
  \centering
  \caption{Solve rate (\%) on the full test sets and hard subsets at the reference NFE.}
  \label{tab:logic-final}
{\footnotesize
\setlength{\tabcolsep}{4pt}
\begin{tabular}{@{}lrrrrrr@{}}
\toprule
 & \multicolumn{2}{c}{Sudoku} & \multicolumn{2}{c}{Boxoban} & \multicolumn{2}{c}{Graph coloring} \\
\cmidrule(lr){2-3}\cmidrule(lr){4-5}\cmidrule(lr){6-7}
Kernel & All & Hard & All & Hard & All & Hard \\
\midrule
Binary symmetric & \textbf{96.27} (0.21) & \textbf{92.66} (0.39) & \textbf{94.29} (0.30) & \textbf{84.65} (0.56) & \textbf{99.35} (0.00) & \textbf{64.13} (1.10) \\
Uniform & 95.47 (0.21) & 89.81 (0.47) & 92.38 (0.64) & 78.73 (1.20) & 99.01 (0.02) & 61.41 (0.34) \\
Binary absorbing & 91.24 (0.11) & 81.08 (0.26) & 87.30 (0.81) & 69.13 (1.66) & 98.83 (0.02) & 54.77 (0.39) \\
Masked & 87.96 (0.16) & 76.12 (0.10) & 91.26 (0.31) & 75.47 (0.69) & 97.44 (0.04) & 4.85 (0.33) \\
\bottomrule
\end{tabular}
}

\end{table}

\begin{figure}[hbt]
  \centering
  \includegraphics[width=\linewidth]{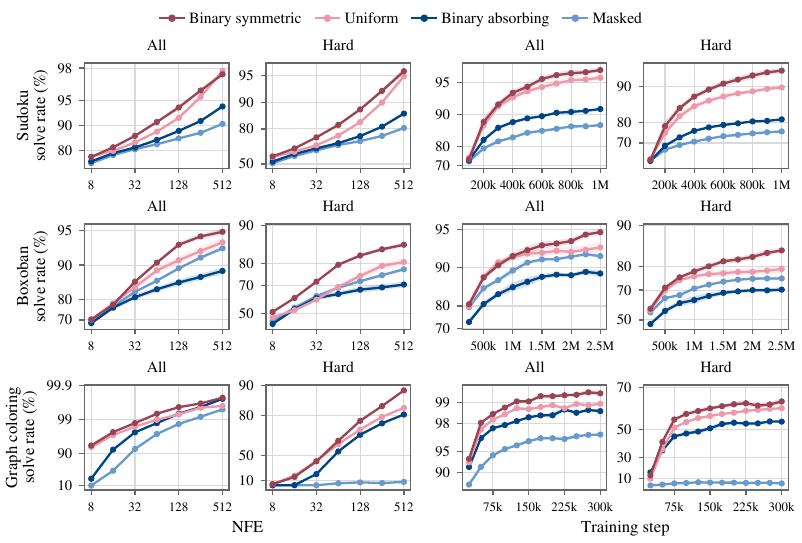}
  \caption{Solve rate against inference budget (NFE, left) and training step (right).}
  \label{fig:logic-nfe}
\end{figure}

Table~\ref{tab:logic-final} reports the solve rates at the reference NFE, while
Figure~\ref{fig:logic-nfe} shows their dependence on inference budget and training progress.
Binary symmetric diffusion is the strongest kernel on every task, followed by uniform diffusion, with masked and binary absorbing diffusion behind both.
Its margin widens on the hard subsets, especially on long-horizon Boxoban and on large graphs.
On the large graphs, masked diffusion nearly fails.
Binary symmetric diffusion leads at almost all budgets and training steps (Figure~\ref{fig:logic-nfe}), rather than depending on a single NFE or checkpoint.
Table~\ref{tab:logic-sampling} ablates the sampling components, while
Figure~\ref{fig:logic-difficulty} in App.~\ref{app:logic-tasks} shows that
the kernel comparison persists across difficulty bins and graph sizes.
We show how each kernel solves a Sudoku board in App.~\ref{app:sudoku-trajectories}.

\subsection{Text}

We compare the four kernels using unconditional generation on three corpora of increasing sequence length $L$: Text8~\citep{mahoney2011text8} at $L=256$, TinyStories~\citep{eldan2023tinystories} at $L=512$, and OpenWebText~\citep{gokaslan2019openwebtext} at $L=1024$ (App.~\ref{app:text-datasets}).
We train every model on 131B tokens, on Text8 at the character level with $V=27$ and on TinyStories and OpenWebText with byte-pair-encoding tokenizers of $V=1024$, using as training objective the path term of the negative ELBO~\eqref{eq:binary-elbo} (except for masked and binary absorbing diffusion on Text8, see App.~\ref{app:training-objectives}).
We report MAUVE~\citep{pillutla2021mauve} against GPT-2 token unigram entropy, with GPT-2 Large~\citep{radford2019gpt2} supplying the features.
MAUVE compares the distribution of generated samples with held-out text, and we use it as the primary metric given known failure modes of generative perplexity (GenPPL)~\citep{franca2026hackingperplexity,zheng2025timeagnostic}, which we report with the negative ELBOs in App.~\ref{app:text-datasets}.

Table~\ref{tab:text-final} gives the test MAUVE of each kernel at the sampler setting selected on validation (App.~\ref{app:sampler-tuning}), while Figure~\ref{fig:text-frontier} traces quality against diversity using sampling settings selected on validation at representative entropies and evaluated on the test split.
No kernel leads on all three corpora: binary symmetric diffusion reaches the best MAUVE on Text8, uniform diffusion on TinyStories, and masked diffusion on OpenWebText, each by a small margin.
The kernels peak at slightly different entropies, so Table~\ref{tab:text-common} in App.~\ref{app:text-datasets} also compares them at one common entropy per corpus, where binary symmetric diffusion reaches the best MAUVE on both Text8 and TinyStories.

\begin{table}[h]
  \centering
  \caption{Test MAUVE of tuned samplers (standard error over two generation seeds).}
  \label{tab:text-final}
{\footnotesize
\setlength{\tabcolsep}{6pt}
\begin{tabular}{@{}lrrrr@{}}
\toprule
Dataset & Binary symmetric & Uniform & Binary absorbing & Masked \\
\midrule
Text8 & \textbf{0.913} (0.001) & 0.896 (0.006) & 0.821 (0.000) & 0.854 (0.008) \\
TinyStories & 0.882 (0.006) & \textbf{0.904} (0.001) & 0.585 (0.011) & 0.850 (0.009) \\
OpenWebText & 0.840 (0.001) & 0.822 (0.004) & 0.747 (0.004) & \textbf{0.871} (0.013) \\
\bottomrule
\end{tabular}
}

\end{table}

\begin{figure}[h]
  \centering
  \includegraphics[width=\linewidth]{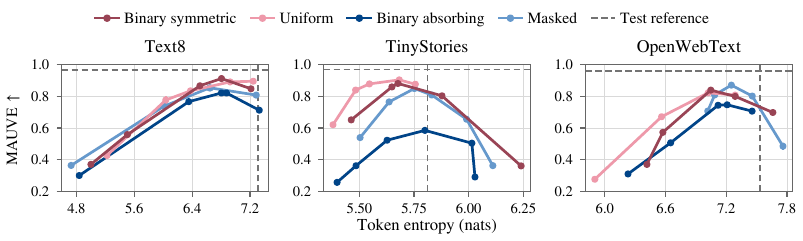}
  \caption{MAUVE against token entropy with GPT-2 Large as the scorer. We select each point's sampling configuration on validation at a representative entropy and evaluate it on the test split.}
  \label{fig:text-frontier}
\end{figure}

\section{Related work}

Discrete diffusion was first formulated on a discrete time grid, modeling the reverse transitions directly~\citep{hoogeboom2021argmax,austin2021d3pm}, and its continuous-time formulation instead learns to reverse a categorical jump process~\citep{campbell2022continuous,sun2022score}.
Since then, methods have differed mainly in how they parameterize the reverse rates~\citep{lou2024sedd,ou2025radd} and in which paths the forward process allows~\citep{vonruette2025gidd,shaul2025kinetic}.
Closer to our setting, simplex-valued methods represent categorical uncertainty in a continuous state~\citep{floto2023simplex,avdeyev2023dirichlet}, while partial and semantic masking keep residual information about the clean token~\citep{chao2025partialmasking,zhou2025hdlm}.
Binary diffusion instead changes the state space: each position follows a jump process on the binary cube, whose coordinates express candidate membership.
Similar bit-flipping processes have been explored to model binary data, where the cube is the data space~\citep{sohldickstein2015deep,pham2025dmpm}.
The closest predecessor is the ShortListing Model, which also grows candidate sets, in discrete time~\citep{song2025shortlisting}.
Under the right noise schedule, its corruption process provides a discrete-time formulation of binary absorbing diffusion (App.~\ref{app:extended-related-work}).
Our main contributions are the non-absorbing symmetric process, which allows candidates to be both removed and restored during denoising, and the empirical evaluation on logic and planning tasks.

Discrete diffusion has been applied to Sudoku and other structured problems before~\citep{ye2024beyondautoregression,wewer2025spatialreasoning}, and follow-up work improves solving through adaptive token order, remasking, explicit planning, or test-time search~\citep{kim2025tokenordering,wang2025remaskingdiscretediffusion,liu2025planneddenoising,zhang2025vfscale}.
Sudoku is also a standard benchmark for small recurrent reasoners such as HRM and TRM~\citep{wang2025hrm,trm2025}, and Lattice Deduction Transformers likewise maintain per-variable candidate sets but train deduction on-policy with search and abstention heuristics~\citep{davis2026latticedeductiontransformers}.
All of these change the sampler, the architecture, or the training signal.
Our experiments are instead designed to isolate the effect of the corruption kernel and evaluate direct, verifier-free samples from each kernel's tuned sampler.

\section{Conclusion}

We have introduced Binary Discrete Diffusion, a family of continuous-time discrete-diffusion processes whose noisy states are sets of candidate tokens, represented as binary membership vectors.
Parameterizing the reverse process through clean-token predictions lets a single $V$-way output determine all reverse rates and sampling updates, avoiding any $2^V$ enumeration in training and generation.
We also introduced a confidence-guided self-correcting sampler that chooses which positions and candidates to reopen.
Under controlled architectures, training budgets, and sampler tuning, binary symmetric diffusion outperforms uniform diffusion, and both outperform masked and binary absorbing diffusion on logic and planning tasks, with the largest gains on the hardest instances.
Binary diffusion also remains competitive on language modeling.
Our results thus show that candidate sets can be an effective intermediate representation for iterative reasoning, and that the corruption process can materially change the computation learned by the model.

Binary diffusion also opens a broader design space for candidate-set and bitwise corruption processes.
Future kernels could exploit semantic or task-specific structure to introduce plausible candidates rather than random ones.
The bits could also encode tokens differently, for example through their binary indices or an error-correcting code, although each such code defines a different state space whose states are no longer candidate sets.
Richer representations of candidate sets than the normalized embedding sum used here may also prove useful at larger scales.
Scaling these ideas to larger models and datasets, and to domains such as code, mathematics, and biological sequence design, is a natural next step.

\subsection*{AI use statement}

Large language models were used to aid and polish writing (editing and reviewing the manuscript), for retrieval and discovery (finding and summarizing related work), and for research ideation and execution (implementing, running, and analyzing experiments).
All AI-assisted work was reviewed by the authors, who verified every technical claim and result and take full responsibility for the final content.



\subsection*{Reproducibility statement}

App.~\ref{app:experimental-details} specifies the datasets with their splits and hard subsets, the architectures and training settings, every evaluation procedure, and the selection of learning rates, checkpoints, and sampler settings.
App.~\ref{app:training} describes the training objectives and time sampling, Apps.~\ref{app:masked-uniform} and~\ref{app:binary} derive the rates, losses, and ancestral updates of every kernel, and App.~\ref{app:psi-sampler} specifies the sampler.

\bibliography{ref}
\bibliographystyle{iclr2027_conference}

\clearpage
\appendix
\section*{Appendix contents}
\startcontents[appendix]
\printcontents[appendix]{l}{1}{\setcounter{tocdepth}{2}}

\newpage

\section{Masked and uniform diffusion}
\label{app:masked-uniform}

We specialize the results of~\citet{campbell2022continuous} and~\citet{casado2026discretediffusionlearn} using the notation of Sections~\ref{sec:discrete}--\ref{sec:binary-sampling}.
Local noisy states are written $z,y$, and the priors are the limiting distributions from which generation starts.
We call the noisy-state distribution $m_i^\theta$ in~\eqref{eq:binary-rate-parameterization} the local model mass.

\subsection{Masked diffusion}
\label{app:masked}

For the masked kernel in~\eqref{eq:standard-kernels}, the off-diagonal forward rates are~\citep{casado2026discretediffusionlearn}
\begin{equation}
  Q_t(z,y)=\frac{\mathbf1\{z\in[V],\ y=\mathtt m\}}{1-t},\qquad y\ne z.
  \label{eq:app-masked-rates-forward}
\end{equation}
For $0\le s<t<1$, the transition kernel is
\begin{equation}
  q_{t\mid s}(y\mid z)
  =\frac{1-t}{1-s}\,\mathbf1\{y=z\}
  +\frac{t-s}{1-s}\,\mathbf1\{y=\mathtt m\}.
  \label{eq:app-masked-kernel}
\end{equation}
The local model masses and denoiser are
\begin{equation}
  \begin{aligned}
    m_i^\theta(y\mid Z_t;t)
     & =
    \begin{cases}
      (1-t)\,\mu_i^\theta(y\mid Z_t;t), & y\in[V],     \\
      t,                                & y=\mathtt m,
    \end{cases}
    \\
    \pi_i^\theta(c\mid Z_t;t)
     & =
    \begin{cases}
      \mu_i^\theta(c\mid Z_t;t), & Z_t^i=\mathtt m, \\
      \mathbf1\{Z_t^i=c\},       & Z_t^i\in[V].
    \end{cases}
  \end{aligned}
  \label{eq:app-masked-head}
\end{equation}
For $y\in[V]$ with $y\ne Z_t^i$, the reverse rates are
\begin{equation}
  a_i(Z_0,Z_t,y;t)
  =\frac{\mathbf1\{Z_t^i=\mathtt m,\ y=Z_0^i\}}t,
  \qquad
  a_i^\theta(Z_t,y;t)
  =\frac{\mathbf1\{Z_t^i=\mathtt m\}\,\mu_i^\theta(y\mid Z_t;t)}t.
  \label{eq:app-masked-rates}
\end{equation}
All other off-diagonal reverse rates vanish.
The path loss reduces to masked cross-entropy:
\begin{equation}
  \mathcal J_t(\theta;Z_0,Z_t)
  =-\frac1t\sum_i\mathbf1\{Z_t^i=\mathtt m\}
  \log\mu_i^\theta(Z_0^i\mid Z_t;t).
  \label{eq:app-masked-loss}
\end{equation}
Plugging $\mu_i^\theta$ into the bridge~\eqref{eq:app-plug-in-bridge} gives the ancestral update of \citet{sahoo2024mdlm}: visible tokens are copied, and a masked position unmasks to $c$ with probability $(1-s/t)\,\mu_i^\theta(c\mid Z_t;t)$.

\subsection{Uniform diffusion}
\label{app:uniform}

For the uniform kernel in~\eqref{eq:standard-kernels}, the off-diagonal forward rates are~\citep{casado2026discretediffusionlearn}
\begin{equation}
  Q_t(z,y)=\frac1{V(1-t)},\qquad y\ne z.
  \label{eq:app-uniform-rates-forward}
\end{equation}
For $0\le s<t<1$, the transition kernel is
\begin{equation}
  q_{t\mid s}(y\mid z)
  =\frac{1-t}{1-s}\,\mathbf1\{y=z\}
  +\frac{t-s}{V(1-s)}.
  \label{eq:app-uniform-kernel}
\end{equation}
The local model mass and induced denoiser are
\begin{equation}
  \begin{aligned}
    m_i^\theta(y\mid Z_t;t) & =(1-t)\,\mu_i^\theta(y\mid Z_t;t)+t/V,                             \\
    \pi_i^\theta(c\mid Z_t;t)
                            & =\frac{\mu_i^\theta(c\mid Z_t;t)\,[(1-t)\,\mathbf1\{Z_t^i=c\}+t/V]}
                               {(1-t)\,\mu_i^\theta(Z_t^i\mid Z_t;t)+t/V}.
  \end{aligned}
  \label{eq:app-uniform-head}
\end{equation}
For $y\ne Z_t^i$, the clean-conditioned and model reverse rates are
\begin{equation}
  \begin{aligned}
    a_i(Z_0,Z_t,y;t)
     & =\frac1{V(1-t)}
    \frac{(1-t)\,\mathbf1\{y=Z_0^i\}+t/V}
    {(1-t)\,\mathbf1\{Z_t^i=Z_0^i\}+t/V}, \\
    a_i^\theta(Z_t,y;t)
     & =\frac1{V(1-t)}
    \frac{(1-t)\,\mu_i^\theta(y\mid Z_t;t)+t/V}
    {(1-t)\,\mu_i^\theta(Z_t^i\mid Z_t;t)+t/V}.
  \end{aligned}
  \label{eq:app-uniform-rates}
\end{equation}
The path loss is
\begin{equation}
  \mathcal J_t(\theta;Z_0,Z_t)
  =\frac1{V(1-t)}\sum_i\sum_{y\ne Z_t^i}
  \Phi\left(
  \frac{q_{t\mid0}(y\mid Z_0^i)}{q_{t\mid0}(Z_t^i\mid Z_0^i)},
  \frac{m_i^\theta(y\mid Z_t;t)}{m_i^\theta(Z_t^i\mid Z_t;t)}
  \right).
  \label{eq:app-uniform-loss}
\end{equation}
All masses in these quotients are positive on an interior time interval.

Plugging $\mu_i^\theta$ into the bridge~\eqref{eq:app-plug-in-bridge} gives the ancestral update of \citet{schiff2025udlm}, which, unlike the masked update, can change a visible token.
\paragraph{Prior term.}
Corrupting a uniform clean token leaves its distribution uniform.
The finite-time and limiting priors thus coincide:
\begin{equation}
  p_{t_2}(y)=1/V,\qquad y\in[V].
  \label{eq:app-uniform-prior}
\end{equation}
For the product sequence prior,
\begin{equation}
  \mathbb E_{Z_0}\operatorname{KL}\big(q_{t_2\mid0}(\cdot\mid Z_0)\,\|\,p_{t_2}\big)
  =L\left[
    (1-t_2+t_2/V)\log\big(V(1-t_2)+t_2\big)
    +\frac{(V-1)t_2}{V}\log t_2
    \right].
  \label{eq:app-uniform-prior-kl}
\end{equation}
The uniform prior has full support at every finite cutoff.

\section{Mathematical details of binary diffusion}
\label{app:binary}

We use the candidate-set notation and the bitwise loss from Section~\ref{sec:binary}.
The network probability mass of the candidates in a set $z$ is
\begin{equation}
  \nu_i^\theta(z\mid Z_t;t)
  \defeq\sum_{c\in z}\mu_i^\theta(c\mid Z_t;t),
  \label{eq:binary-symmetric-set-mass}
\end{equation}
and $m_i^\theta$ denotes the local model mass of~\eqref{eq:binary-rate-parameterization}.
Superscripts $\mathrm{sym}$ and $\mathrm{abs}$ specify the kernel, and we omit them in derivations where it is fixed.
The cavity $\mu_i^\theta$, the masses $m_i^\theta$, and the denoiser $\pi_i^\theta$ always refer to that kernel.

\subsection{Binary symmetric diffusion}
\label{app:binary-symmetric}

\begin{proposition}[Symmetric kernel and reverse rates]
  \label{prop:app-binary-symmetric}
  Let each bit flip at rate $1/[2(1-t)]$.
  For $0\le s<t<1$, the probability of a bit differing from its value at $s$ is
  \begin{equation}
    \varepsilon_{s,t}=\frac{t-s}{2(1-s)}.
    \label{eq:app-binary-symmetric-flip}
  \end{equation}
  The local kernel is
  \begin{equation}
    q_{t\mid s}^{\mathrm{sym}}(y\mid z)
    =\big(1-\varepsilon_{s,t}\big)^{V-d_H(y,z)}
    \big(\varepsilon_{s,t}\big)^{d_H(y,z)}.
    \label{eq:app-binary-symmetric-kernel}
  \end{equation}
  Using the odds $\rho_t$ from~\eqref{eq:binary-symmetric-odds} and writing $z=Z_t^i$, the reverse rates are
  \begin{equation}
    \begin{aligned}
      a_{i,u}^{\mathrm{sym}}(Z_0,Z_t;t)
       & =\frac1{2(1-t)}
      \begin{cases}
        \rho_t,      & z_u\ne(Z_0^i)_u, \\
        \rho_t^{-1}, & z_u=(Z_0^i)_u,
      \end{cases}
      \\
      a_{i,u}^{\theta,\mathrm{sym}}(Z_t;t)
       & =\frac{\rho_t^{2z_u-1}}{2(1-t)}
      \frac{1+(\rho_t^2-1)\,\nu_i^\theta(z^{(u)}\mid Z_t;t)}
      {1+(\rho_t^2-1)\,\nu_i^\theta(z\mid Z_t;t)}.
    \end{aligned}
    \label{eq:app-binary-symmetric-rates}
  \end{equation}
\end{proposition}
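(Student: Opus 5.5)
The proof has three parts, in order: the single-bit flip probability, the product kernel, and then the two reverse rates. For the flip probability, I would work with one bit on its own, since given the starting state the bits evolve independently. A single bit is a two-state CTMC that flips at rate $r_t=1/[2(1-t)]$. Let $p(t)$ be the probability that it differs from its value at time $s$. The Kolmogorov forward equation gives $p'(t)=r_t(1-2p(t))$ with $p(s)=0$.

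Set $g=1-2p$. Then $g'=-g/(1-t)$, so $g(t)=(1-t)/(1-s)$, and therefore $p=\tfrac12\bigl(1-\tfrac{1-t}{1-s}\bigr)=\varepsilon_{s,t}$. Because the bits are independent, the kernel is a product over the $V$ bits. Each of the $d_H(y,z)$ bits that differ contributes a factor $\varepsilon_{s,t}$, and each of the remaining bits contributes $1-\varepsilon_{s,t}$. This gives \eqref{eq:app-binary-symmetric-kernel}. Setting $s=0$ recovers \eqref{eq:binary-symmetric-clean-kernel}.

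For the clean-conditioned rate, I would apply \eqref{eq:discrete-conditioned-rate} with $y=z^{(u)}$ and $Q_t(z^{(u)},z)=r_t$. In the ratio $q_{t\mid0}(z^{(u)}\mid c)/q_{t\mid0}(z\mid c)$, every factor except the one for bit $u$ cancels. If $z_u\ne c_u$, the flip removes a mismatch and the ratio is $(1-t/2)/(t/2)=\rho_t$; otherwise the flip creates a mismatch and the ratio is $\rho_t^{-1}$. This proves the first line.

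The model rate is the main computation. I would use \eqref{eq:binary-rate-parameterization} directly and compute $m_i^\theta(z\mid Z_t;t)$ in closed form. First I would write
\[
q_{t\mid0}(z\mid c)=(1-t/2)^{V-|z|-1}(t/2)^{|z|+1}\,\rho_t^{2\mathbf 1\{c\in z\}}.
\]
To see this, note that $d_H(z,\{c\})=|z|-1$ if $c\in z$ and $|z|+1$ otherwise. The case $c\notin z$ gives the displayed expression with exponent zero. The case $c\in z$ differs from it by two fewer mismatches, which is a factor $\rho_t^2$. Summing over $c$ against $\mu_i^\theta$ then gives
\[
m_i^\theta(z)=(1-t/2)^{V-|z|-1}(t/2)^{|z|+1}\bigl[1+(\rho_t^2-1)\nu_i^\theta(z)\bigr].
\]

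The remaining step is the ratio $m_i^\theta(z^{(u)})/m_i^\theta(z)$. Because $|z^{(u)}|=|z|+1-2z_u$, the prefactors divide to $\rho_t^{2z_u-1}$. This is where I expect the most care is needed, since the sign convention has to be checked in both cases: removing an active candidate ($z_u=1$) should give $\rho_t$, and adding one ($z_u=0$) should give $\rho_t^{-1}$. Multiplying the ratio by $r_t$ then yields the second line of \eqref{eq:app-binary-symmetric-rates}. As a consistency check, this closed form should also agree with the denoiser-average form \eqref{eq:binary-symmetric-removal}.
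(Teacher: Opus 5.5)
Your proposal is correct and follows the paper's proof essentially step for step. It uses the same single-bit forward equation (you solve it via $g=1-2p$ where the paper only verifies the solution) and the same cancellation of unchanged-bit factors for the clean-conditioned rate. It also uses the same factorization $q_{t\mid0}(z\mid c)=C_t(z)\,\rho_t^{2z_c-1}$ (your prefactor is exactly $C_t(z)/\rho_t$), giving the prefactor ratio $\rho_t^{2z_u-1}$. The paper only adds the remark that all local masses are positive for $0<t<1$, which your closed form for $m_i^\theta$ makes evident since $\rho_t>1$.
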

\begin{proof}
  The bit-flip probability solves
  \[
    \partial_t \varepsilon_{s,t}=\frac{1-2\varepsilon_{s,t}}{2(1-t)},\qquad \varepsilon_{s,s}=0.
  \]
  The stated flip probability satisfies this equation and initial condition, and multiplying the independent bit kernels proves~\eqref{eq:app-binary-symmetric-kernel}.
  The clean-conditioned reverse rates follow from the change in Hamming distance: a flip toward the clean bit multiplies the clean-conditioned mass by $\rho_t$, while a flip away multiplies it by $\rho_t^{-1}$.
  Multiplying these ratios by the forward flip rate gives the formula for $a_{i,u}$.

  To compute the model mass, put
  \[
    C_t(z)=(1-t/2)^{V-|z|}(t/2)^{|z|}.
  \]
  As a function of the clean token, the forward kernel is
  \[
    q_{t\mid0}(z\mid c)
    =C_t(z)\,\rho_t^{2z_c-1},
    \qquad
    m_i^\theta(z\mid Z_t;t)=\frac{C_t(z)}{\rho_t}
    [1+(\rho_t^2-1)\,\nu_i^\theta(z\mid Z_t;t)].
  \]
  Toggling bit $u$ gives
  \[
    \frac{C_t(z^{(u)})}{C_t(z)}=\rho_t^{2z_u-1},\qquad
    \nu_i^\theta(z^{(u)}\mid Z_t;t)=\nu_i^\theta(z\mid Z_t;t)+(1-2z_u)\,\mu_i^\theta(u\mid Z_t;t).
  \]
  The quotient of the two local masses, multiplied by $1/[2(1-t)]$, gives the stated model rate.
  All local masses are positive for $0<t<1$, so every bit flip has positive model rate.
\end{proof}

The denoiser used for decoding and sampling is
\begin{equation}
  \pi_i^\theta(c\mid Z_t;t)
  =\frac{\mu_i^\theta(c\mid Z_t;t)\,\rho_t^{2(Z_t^i)_c}}
  {1+(\rho_t^2-1)\,\nu_i^\theta(Z_t^i\mid Z_t;t)}.
  \label{eq:app-binary-symmetric-decode}
\end{equation}
Substituting $\mu_i^\theta(u\mid Z_t;t)=\rho_t^{-2z_u}\pi_u[1+(\rho_t^2-1)\nu_i^\theta(z\mid Z_t;t)]$, with $\pi_u=\pi_i^\theta(u\mid Z_t;t)$, turns the ratio of masses in~\eqref{eq:app-binary-symmetric-rates} into $1-(1-\rho_t^{-2})\pi_u$ for $u\in z$ and $1+(\rho_t^2-1)\pi_u$ for $u\notin z$.
The model rate is therefore $[\rho_t(1-\pi_u)+\rho_t^{-1}\pi_u]/[2(1-t)]$ for removing an active candidate and $[\rho_t\pi_u+\rho_t^{-1}(1-\pi_u)]/[2(1-t)]$ for adding an inactive one, which is~\eqref{eq:binary-symmetric-removal}.

\paragraph{Prior.}
The prior draws independent fair bits, $p_{t_2}^{\mathrm{sym}}(z)=2^{-V}$.
For Bernoulli distributions with success probabilities $a,b\in(0,1)$, the KL divergence is
\begin{equation}
  D_{\mathrm{Ber}}(a\|b)
  =a\log\frac ab+(1-a)\log\frac{1-a}{1-b}.
  \label{eq:app-bernoulli-kl}
\end{equation}
Conditional on the clean token $c$, the bits of $Z_{t_2}^i$ are independent, with bit $c$ active with probability $1-t_2/2$ and every other bit with probability $t_2/2$.
The prior also factors over bits, so the KL is a sum of Bernoulli terms, which coincide because $D_{\mathrm{Ber}}(1-a\|1/2)=D_{\mathrm{Ber}}(a\|1/2)$:
\begin{equation}
  \operatorname{KL}\big(q_{t_2\mid0}^{\mathrm{sym}}(\cdot\mid c)\|p_{t_2}^{\mathrm{sym}}\big)
  =V D_{\mathrm{Ber}}(t_2/2\|1/2).
  \label{eq:app-binary-symmetric-prior-kl}
\end{equation}
This does not depend on $c$ and vanishes as $t_2\uparrow1$.

\subsection{Binary absorbing diffusion}
\label{app:binary-absorbing}

\begin{proposition}[Absorbing kernel and rates]
  \label{prop:app-binary-absorbing}
  Let each zero bit activate at rate $1/(1-t)$ and each active bit remain active.
  For $0\le s<t<1$ and nonempty binary vectors $z,y$,
  \begin{equation}
    q_{t\mid s}^{\mathrm{abs}}(y\mid z)
    =\mathbf1\{z\subseteq y\}
    \left(\frac{1-t}{1-s}\right)^{V-|y|}
    \left(\frac{t-s}{1-s}\right)^{|y|-|z|}.
    \label{eq:app-binary-absorbing-kernel}
  \end{equation}
  The expression is zero when the subset condition fails.
  For the clean singleton $\{c\}$, it reduces to
  \begin{equation}
    q_{t\mid0}^{\mathrm{abs}}(z\mid c)
    =\mathbf1\{c\in z\}(1-t)^{V-|z|}t^{|z|-1}.
    \label{eq:app-binary-absorbing-clean-kernel}
  \end{equation}
  The clean-conditioned and model reverse rates are
  \begin{equation}
    a_{i,u}^{\mathrm{abs}}(Z_0,Z_t;t)=\frac{\mathbf1\{u\in Z_t^i\setminus Z_0^i\}}t,
    \qquad
    a_{i,u}^{\theta,\mathrm{abs}}(Z_t;t)
    =\frac{\mathbf1\{u\in Z_t^i\}}t
    \frac{\nu_i^\theta(Z_t^i\setminus\{u\}\mid Z_t;t)}{\nu_i^\theta(Z_t^i\mid Z_t;t)}.
    \label{eq:app-binary-absorbing-rates}
  \end{equation}
\end{proposition}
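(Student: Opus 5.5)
The proof mirrors that of Proposition~\ref{prop:app-binary-symmetric}. We first obtain the kernel from the single-bit dynamics, then compute the clean-conditioned rate from the kernel ratio in~\eqref{eq:discrete-conditioned-rate}, and finally obtain the model rate from the local masses in~\eqref{eq:binary-rate-parameterization}.

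\emph{Kernel.} Conditional on $Z_s=z$, the bits evolve independently. An active bit stays active with probability one. An inactive bit activates at rate $1/(1-t)$, so its probability of still being inactive at time $t$ solves $\partial_t p=-p/(1-t)$ with $p(s)=1$. This gives $p=(1-t)/(1-s)$, and the activation probability is $(t-s)/(1-s)$. Multiplying over bits gives~\eqref{eq:app-binary-absorbing-kernel}, and the factor is zero unless $z\subseteq y$. The inactive bits of $y$ number $V-|y|$ and the newly active ones number $|y|-|z|$. Setting $s=0$ and $z=\{c\}$ gives~\eqref{eq:app-binary-absorbing-clean-kernel}.

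\emph{Clean-conditioned rate.} The forward process only adds candidates, so the reverse process only removes them. A move from $z=Z_t^i$ to $y=z\setminus\{u\}$ with $u\in z$ has forward rate $Q_t(y,z)=1/(1-t)$, because $u\notin y$. If $u\ne Z_0^i$, the kernel ratio $q_{t\mid0}(y\mid c)/q_{t\mid0}(z\mid c)$ removes one factor of $t$ and adds one factor of $(1-t)$, so it equals $(1-t)/t$, and the rate is $1/t$. If $u=Z_0^i$, the numerator vanishes because $c\notin y$. This gives the indicator $\mathbf1\{u\in Z_t^i\setminus Z_0^i\}/t$.

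\emph{Model rate.} Write $D_t(z)=(1-t)^{V-|z|}t^{|z|-1}$. Then
\[
m_i^\theta(z\mid Z_t;t)=\sum_c\mu_i^\theta(c)\mathbf1\{c\in z\}D_t(z)=D_t(z)\,\nu_i^\theta(z\mid Z_t;t).
\]
Hence the mass ratio for removing $u$ is
\[
\frac{D_t(z\setminus\{u\})}{D_t(z)}\cdot\frac{\nu_i^\theta(z\setminus\{u\})}{\nu_i^\theta(z)}=\frac{1-t}{t}\cdot\frac{\nu_i^\theta(z\setminus\{u\})}{\nu_i^\theta(z)}.
\]
Multiplying by $1/(1-t)$ gives~\eqref{eq:app-binary-absorbing-rates}. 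As a consistency check, $\nu_i^\theta(z\setminus\{u\})/\nu_i^\theta(z)=1-\pi_i^\theta(u)$, because $\pi_i^\theta$ is the restriction of $\mu_i^\theta$ to $z$, renormalized. This recovers~\eqref{eq:binary-absorbing-rate}.

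\emph{Main obstacle.} The only delicate point is the support. We need $\nu_i^\theta(z)>0$, which holds because $\mu_i^\theta$ is strictly positive and $z$ is nonempty. We must also handle the case $z\setminus\{u\}=\emptyset$. There $\nu_i^\theta(\emptyset)=0$, so the model rate vanishes, consistent with the clean token never being removed. Equivalently, $m_i^\theta(\emptyset)=0$, so the cavity formula still applies with a zero numerator rather than an undefined ratio.
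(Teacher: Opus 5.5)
Your proof is correct and follows essentially the same route as the paper. Both derive the kernel from the per-bit survival probability $(1-t)/(1-s)$, use the factorization $m_i^\theta(z\mid Z_t;t)=(1-t)^{V-|z|}t^{|z|-1}\nu_i^\theta(z\mid Z_t;t)$, multiply the resulting mass ratio by the reversed activation rate $1/(1-t)$, and note that the empty set has zero mass. The only cosmetic difference is that you compute the clean-conditioned rate directly from the kernel ratio, whereas the paper obtains it from the model-rate formula by substituting $\mu_i^\theta(u\mid Z_t;t)=\mathbf 1\{u=c\}$.
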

\begin{proof}
  A bit that is zero at $s$ remains zero at $t$ with probability
  \[
    \exp\left(-\int_s^t\frac{dv}{1-v}\right)=\frac{1-t}{1-s}.
  \]
  Once activated, it remains one.
  Multiplication over bits proves~\eqref{eq:app-binary-absorbing-kernel}, and $z=\{c\}$, $s=0$ gives~\eqref{eq:app-binary-absorbing-clean-kernel}.
  For a nonempty state $z$,
  \[
    m_i^\theta(z\mid Z_t;t)
    =(1-t)^{V-|z|}t^{|z|-1}\,\nu_i^\theta(z\mid Z_t;t).
  \]
  Deleting an active bit $u$ changes this mass by the factor
  \[
    \frac{(1-t)\,\nu_i^\theta(z\setminus\{u\}\mid Z_t;t)}
    {t\,\nu_i^\theta(z\mid Z_t;t)}.
  \]
  Multiplying by the reversed forward activation rate $1/(1-t)$ gives the model rate.
  For $Z_0^i=c$, the substitution
  \[
    \mu_i^\theta(u\mid Z_t;t)=\mathbf1\{u=c\}
  \]
  gives the clean-conditioned rate.
  Deleting the last bit has zero rate because the empty vector has zero mass.
\end{proof}

\begin{proposition}[Absorbing path loss]
  \label{prop:app-binary-absorbing-loss}
  Under the assumptions of Proposition~\ref{prop:app-binary-absorbing},
  \begin{equation}
    \mathcal J_t^{\mathrm{abs}}(\theta;Z_0,Z_t)
    =-\frac1t\sum_i\sum_{u\in Z_t^i\setminus Z_0^i}
    \log\left(1-\frac{\mu_i^\theta(u\mid Z_t;t)}{\nu_i^\theta(Z_t^i\mid Z_t;t)}\right).
    \label{eq:app-binary-absorbing-loss}
  \end{equation}
\end{proposition}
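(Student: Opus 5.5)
The plan is to substitute the two rates of Proposition~\ref{prop:app-binary-absorbing} into the bitwise path loss $\mathcal J_t=\sum_i\sum_{u=1}^V\Phi(a_{i,u},a_{i,u}^\theta)$ of Section~\ref{sec:binary-embedding}. Then evaluate $\Phi(a,b)=a\log(a/b)-a+b$ in each of the cases that a bit $u$ can fall into. Since the loss is defined only up to the $\theta$-independent entropy-type terms, the work is to check that all terms not involving a $\log$ cancel exactly, so that nothing beyond the stated cross-entropy form remains.

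First, I would rewrite the model rate. Since $\nu_i^\theta(z\setminus\{u\})=\nu_i^\theta(z)-\mu_i^\theta(u)$ for $u\in z$, Proposition~\ref{prop:app-binary-absorbing} gives
\[
a_{i,u}^{\theta}=\frac{\mathbf1\{u\in Z_t^i\}}{t}\Big(1-\frac{\mu_i^\theta(u)}{\nu_i^\theta(Z_t^i)}\Big),
\]
which is $\mathbf1\{u\in Z_t^i\}(1-\pi_i^\theta(u))/t$. The bits then split into three cases, where I use $Z_0^i\in Z_t^i$ almost surely under the forward process:
\begin{itemize}
\item For $u\notin Z_t^i$, both rates vanish and $\Phi(0,0)=0$.
\item For $u=Z_0^i$, we have $a=0$, so $\Phi(0,b)=b=(1-\pi_i^\theta(Z_0^i))/t$.
\item For $u\in Z_t^i\setminus Z_0^i$, we have $a=1/t$, so $\Phi=\frac1t\log\frac{1/t}{b}-\frac1t+b$, where $\log\frac{1/t}{b}=-\log(1-\pi_i^\theta(u))$.
\end{itemize}

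Second, I would sum over $u$ at a fixed position $i$ and collect the linear terms. They are
\[
\frac1t\Big[(1-\pi_i^\theta(Z_0^i))+\sum_{u\in Z_t^i\setminus Z_0^i}\big(1-\pi_i^\theta(u)\big)-\big|Z_t^i\setminus Z_0^i\big|\Big]
=\frac1t\Big[1-\sum_{u\in Z_t^i}\pi_i^\theta(u)\Big].
\]
This vanishes because $\pi_i^\theta$ is supported on $Z_t^i$ and normalized there. This cancellation is the key step: it uses exactly that the denoiser is the renormalized restriction of $\mu_i^\theta$ to the current candidate set. The only remaining terms are $-\frac1t\sum_{u\in Z_t^i\setminus Z_0^i}\log(1-\mu_i^\theta(u)/\nu_i^\theta(Z_t^i))$, and summing over $i$ gives~\eqref{eq:app-binary-absorbing-loss}.

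The main obstacle is a boundary case: if $Z_t^i=\{Z_0^i\}$, or generally if $\pi_i^\theta(u)=1$, the logarithm could degenerate. Positivity of $\mu_i^\theta$ excludes $\pi_i^\theta(u)=1$ whenever $|Z_t^i|\ge2$. When $Z_t^i=\{Z_0^i\}$, the sum over $Z_t^i\setminus Z_0^i$ is empty, and the single linear term $1-\pi_i^\theta(Z_0^i)=0$ vanishes. I would also note the convention $0\log 0=0$ in $\Phi$, which is needed for the case $a=0$.
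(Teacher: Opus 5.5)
Your proof is correct and follows the paper's argument essentially step by step. The paper likewise uses the homogeneity $\Phi(a/t,b/t)=\Phi(a,b)/t$ to write the local loss times $t$ as $\Phi(0,1-h_c)+\sum_{u\in Z_t^i\setminus\{c\}}\Phi(1,1-h_u)$ with $h_u=\pi_i^\theta(u\mid Z_t;t)$, and cancels the linear terms because the $h_u$ sum to one over $Z_t^i$; your opening remark about discarding $\theta$-independent terms is unnecessary, since the identity is exact, as your own computation shows.
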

\begin{proof}
  Fix $i$, write $Z_0^i=c$, and put
  \[
    h_u=\frac{\mu_i^\theta(u\mid Z_t;t)}{\nu_i^\theta(Z_t^i\mid Z_t;t)},\quad u\in Z_t^i,
    \qquad \sum_{u\in Z_t^i}h_u=1.
  \]
  The homogeneity identity $\Phi(a/t,b/t)=\Phi(a,b)/t$ gives the local loss times $t$ as
  \[
    \Phi(0,1-h_c)
    +\sum_{u\in Z_t^i\setminus\{c\}}\Phi(1,1-h_u)
    =1-h_c+\sum_{u\ne c,\ u\in Z_t^i}
    [-h_u-\log(1-h_u)].
  \]
  Because the $h_u$ sum to one, the linear terms cancel, leaving~\eqref{eq:app-binary-absorbing-loss}.
  For a singleton, the remaining sum is empty and the loss is zero.
\end{proof}

For absorbing diffusion, the local kernel is constant over clean tokens in $Z_t^i$ and zero outside it.
Bayes reweighting gives the denoiser
\begin{equation}
  \pi_i^\theta(c\mid Z_t;t)
  =\frac{\mu_i^\theta(c\mid Z_t;t)\,\mathbf1\{c\in Z_t^i\}}
  {\nu_i^\theta(Z_t^i\mid Z_t;t)}.
  \label{eq:app-binary-absorbing-decode}
\end{equation}
Since $\nu_i^\theta(Z_t^i\setminus\{u\}\mid Z_t;t)/\nu_i^\theta(Z_t^i\mid Z_t;t)=1-\pi_i^\theta(u\mid Z_t;t)$ for $u\in Z_t^i$, the model rate in~\eqref{eq:app-binary-absorbing-rates} equals~\eqref{eq:binary-absorbing-rate}.
The cavity parameterization applies this support restriction analytically, while a separately predicted denoiser must respect it through explicit masking or learning.

\subsection{Binary ancestral bridges}
\label{app:binary-sampling}

By Bayes' rule and the Markov property, the forward bridge of Section~\ref{sec:binary-sampling} is
\begin{equation}
  q_{s\mid t,0}(y\mid z,c)
  =\frac{q_{t\mid s}(z\mid y)\,q_{s\mid0}(y\mid c)}{q_{t\mid0}(z\mid c)}.
  \label{eq:discrete-bridge}
\end{equation}
Averaging it over the denoiser~\eqref{eq:binary-clean-distribution} cancels $q_{t\mid0}(z\mid c)$, which gives the plug-in form of the ancestral update~\eqref{eq:binary-ancestral}:
\begin{equation}
  p_{s\mid t,i}^\theta(y\mid Z_t)
  =\frac{q_{t\mid s}(Z_t^i\mid y)\sum_c\mu_i^\theta(c\mid Z_t;t)\,q_{s\mid0}(y\mid c)}
  {\sum_c\mu_i^\theta(c\mid Z_t;t)\,q_{t\mid0}(Z_t^i\mid c)}.
  \label{eq:app-plug-in-bridge}
\end{equation}
This is the bridge with the cavity in place of the clean token.
We now derive the conditional bit bridges from~\eqref{eq:discrete-bridge}.

\paragraph{Symmetric bridge.}
Conditional on the clean proposal $\Zhat{}=c$, the bits are independent.
With $\varepsilon_{s,t}$ from~\eqref{eq:app-binary-symmetric-flip}, bit $u$ with clean value $b_u=\mathbf1\{u=c\}$ and current value $z_u=(Z_t^i)_u$ differs from $b_u$ at $s$ with probability
\begin{equation}
  q^{\mathrm{sym}}\big((Z_s^i)_u\ne b_u\mid z_u,b_u\big)
  =
  \begin{cases}
    s\,\varepsilon_{s,t}/(2-t), & z_u=b_u,    \\[1mm]
    s\,(1-\varepsilon_{s,t})/t, & z_u\ne b_u,
  \end{cases}
  \label{eq:app-binary-symmetric-bridge}
\end{equation}
the chance of differing from the clean bit at $s$ and then flipping, or not, to the current value at $t$.
It favors the current value for $s$ close to $t$ and the clean bit as $s\downarrow0$.

A simpler symmetric sampler applies a one-bit Bayes update to each cavity bit marginal and draws the bits independently.
This ignores the dependence between the bits through the shared clean token: at $s=0$, the exact bridge draws a single one-hot state from $\pi_i^\theta$, whereas independent bit draws can leave zero or several candidates active.
It performed worse in early experiments, and we dropped it.

\paragraph{Absorbing bridge.}
Conditional on $\Zhat{}=c$, the bit $c$ stays active, and every bit inactive in $Z_t^i$ remains inactive at $s$.
For $u\in Z_t^i\setminus\{c\}$,
\begin{equation}
  q^{\mathrm{abs}}((Z_s^i)_u=1\mid (Z_t^i)_u=1,Z_0^i=c)=\frac{s}{t}.
  \label{eq:app-binary-absorbing-bridge}
\end{equation}
Indeed, a false candidate is active at $s$ with probability $s$, active at $t$ with probability $t$, and activation by $s$ implies activation by $t$.
These bit events are independent conditional on $c$ and the current state.
The update therefore preserves nonemptiness and produces a subset of the current active set.

\section{Training objectives and time sampling}
\label{app:training}

\subsection{Finite-window negative ELBO and boundary terms}
\label{app:finite-window-nelbo}

The idealized objective~\eqref{eq:binary-elbo} integrates the reverse process from clean data at $t=0$ to the terminal noise distribution at $t=1$.
In practice we use an interior window $0<t_1<t_2<1$ to avoid singular jump rates near the endpoints.
The corresponding finite-window negative ELBO, abbreviated NELBO in the appendices, is
\begin{equation}
  \NELBO(\theta;t_1,t_2)
  =
  \mathcal R_{t_1}(\theta)
  +
  \mathcal K_{t_2}
  +
  \int_{t_1}^{t_2}
  \mathbb E_{Z_0,Z_t}
    [\mathcal J_t(\theta;Z_0,Z_t)]\,dt,
  \label{eq:app-finite-window-nelbo}
\end{equation}
where
\begin{equation}
  \mathcal R_{t_1}(\theta)
  =
  \mathbb E_{Z_0,Z_{t_1}}
    [-\log p_{0\mid t_1}^\theta(Z_0\mid Z_{t_1})],
  \qquad
  \mathcal K_{t_2}
  =
  \mathbb E_{Z_0}
  \operatorname{KL}
  \big(
  q_{t_2\mid0}(\cdot\mid Z_0)
  \,\|\,p_{t_2}
  \big).
  \label{eq:app-boundary-terms}
\end{equation}
The first term, with the denoiser as decoder, $p_{0\mid t_1}^\theta(Z_0\mid Z_{t_1})=\prod_i\pi_i^\theta(Z_0^i\mid Z_{t_1};t_1)$, accounts for the residual corruption at the clean end of the truncated path, while the second measures the mismatch between the forward distribution at $t_2$ and the prior from which generation starts.

For uniform and binary symmetric diffusion, the terminal-prior term is finite and available analytically (Apps.~\ref{app:uniform} and~\ref{app:binary-symmetric}).
With the prior matched to the limiting forward distribution, the boundary contributions vanish as $t_1\downarrow0$ and $t_2\uparrow1$, recovering~\eqref{eq:binary-elbo}.
For masked and binary absorbing diffusion, the limiting prior is a point mass, so the terminal KL is infinite for every $t_2<1$ but zero at $t_2=1$.
The terminal endpoint therefore requires separate treatment, and App.~\ref{app:likelihood-evaluation} describes our likelihood-evaluation convention.

\subsection{Training objectives}
\label{app:training-objectives}
Alongside the path ELBO, we compared a flat ELBO, cross-entropy (xent), and weighted cross-entropy (wxent).
For the kernels considered here, the reverse rates share a scalar prefactor, which extends the flip rate $r_t$ of~\eqref{eq:binary-symmetric-forward-rates} to every kernel,
\begin{equation}
  r_t=
  \begin{cases}
    1/[2(1-t)], & \text{binary symmetric}, \\
    1/[V(1-t)], & \text{uniform},          \\
    1/t,        & \text{binary absorbing}, \\
    1/t,        & \text{masked},
  \end{cases}
  \label{eq:training-rate-factor}
\end{equation}
and the remaining factors are ratios of the forward kernel $q_{t\mid0}$ at the current noisy token.
Since $\Phi(ra,rb)=r\Phi(a,b)$, the prefactor is also the weight of the jump penalties in the path integrand $\mathcal J_t$ of~\eqref{eq:binary-elbo}.
The flat ELBO drops it, $\mathcal J_t^{\mathrm{flat}}=\mathcal J_t/r_t$, which changes the relative weight of the noise levels.
Cross-entropy trains the network output directly as a denoiser,
\begin{equation}
  \mathcal J_t^{\mathrm{xent}}(\theta;Z_0,Z_t)
  =-\sum_{i=1}^L\log\mu_i^\theta(Z_0^i\mid Z_t;t),
  \label{eq:training-xent}
\end{equation}
and weighted cross-entropy multiplies it by the prefactor, $\mathcal J_t^{\mathrm{wxent}}=r_t\,\mathcal J_t^{\mathrm{xent}}$.
For masked diffusion the sums run over masked positions.
The ELBO and wxent are integrated over time as in~\eqref{eq:binary-elbo}, while the flat ELBO and xent are averaged over the base schedule of App.~\ref{app:binary-clocks}.
All four are estimated with the inverse-density weight defined there, which the adaptive proposal of App.~\ref{app:adaptive-time-sampling} preserves.

We train the logic models on the path ELBO.
For text, we selected the objective of each kernel and corpus on the validation NELBO after 10\% of the training budget.
The path ELBO was best everywhere except on Text8, where we train masked diffusion on xent, which coincides with its flat ELBO, and binary absorbing diffusion on wxent.
The latter still bounds the likelihood, as we show next.

\paragraph{Absorbing cross-entropy bound.}
For binary absorbing diffusion, wxent upper-bounds the path integrand.
Fix a position with clean token $Z_0^i=c$.
The denoiser is supported on the active set $Z_t^i$, which contains $c$, so
\begin{equation}
  \prod_{u\in Z_t^i\setminus\{c\}}
  \bigl(1-\pi_i^\theta(u\mid Z_t;t)\bigr)
  \ge 1-\sum_{u\in Z_t^i\setminus\{c\}}\pi_i^\theta(u\mid Z_t;t)
  =\pi_i^\theta(c\mid Z_t;t).
  \label{eq:app-absorbing-xent-bound}
\end{equation}
Since $\pi_i^\theta(c\mid Z_t;t)=\mu_i^\theta(c\mid Z_t;t)/\nu_i^\theta(Z_t^i\mid Z_t;t)\ge\mu_i^\theta(c\mid Z_t;t)$, taking negative logarithms, multiplying by $1/t$, and summing over positions in~\eqref{eq:app-binary-absorbing-loss} gives $\mathcal J_t^{\mathrm{abs}}\le\mathcal J_t^{\mathrm{wxent}}$.

\subsection{Time schedules}
\label{app:binary-clocks}

The base schedule reparameterizes corruption time on the clipped window $[t_1,t_2]$ by $\xi\in[0,1]$, which we draw uniformly during training.
The linear, logarithmic, and cosine maps are
\begin{equation}
  t_{\mathrm{lin}}(\xi)=t_1+(t_2-t_1)\xi,
  \qquad
  t_{\log}(\xi)=t_1\left(\frac{t_2}{t_1}\right)^{\xi},
  \qquad
  t_{\cos}(\xi)=t_1+(t_2-t_1)\sin^2\!\Big(\frac{\pi\xi}{2}\Big),
  \label{eq:app-binary-clock-maps}
\end{equation}
which give the base sampling densities on $[t_1,t_2]$
\begin{equation}
  p_{\mathrm{lin}}(t)=\frac1{t_2-t_1},
  \qquad
  p_{\log}(t)=\frac1{t\log(t_2/t_1)},
  \qquad
  p_{\cos}(t)=\frac1{\pi\sqrt{(t-t_1)(t_2-t)}}.
  \label{eq:app-binary-clock-densities}
\end{equation}
Logarithmic sampling assigns equal probability to intervals with the same ratio of endpoints, giving more resolution near clean data.
At large vocabularies, this matches how the binary kernels lose information (App.~\ref{app:information}).
Cosine sampling instead gives more resolution near both ends of the window.
To estimate the path integral without bias, we weight each sequence's loss by the inverse density of its sampled time, namely $t_2-t_1$ for the linear schedule, $t\log(t_2/t_1)$ for the logarithmic schedule, and $\pi\sqrt{(t-t_1)(t_2-t)}$ for the cosine schedule.

We give $\xi$ to the network as its time input:
\begin{equation}
  \xi(t)=
  \begin{cases}
    \displaystyle\frac{t-t_1}{t_2-t_1},                        & \text{linear schedule},      \\[6pt]
    \displaystyle\frac{\log(t/t_1)}{\log(t_2/t_1)},            & \text{logarithmic schedule}, \\[6pt]
    \displaystyle\frac2\pi\arcsin\sqrt{\frac{t-t_1}{t_2-t_1}}, & \text{cosine schedule}.
  \end{cases}
  \label{eq:app-binary-clock-conditioning}
\end{equation}
The adaptive proposal below also acts on $\xi$, while the kernel and rates retain their definitions in corruption time $t$.
We choose inference grids separately from training-time sampling.
Applying any map in~\eqref{eq:app-binary-clock-maps} to $j/N$, for $j=0,\ldots,N$, gives a linear, logarithmic, or cosine grid for $N$ reverse steps, traversed in decreasing $j$ as in Algorithm~\ref{alg:binary-sampling}.
The logarithmic grid has smaller absolute steps near $t_1$, and the cosine grid near both ends of the window.
We train with the linear and logarithmic schedules (Tables~\ref{tab:logic-config} and~\ref{tab:text-config}) and sample on grids of all three kinds (Table~\ref{tab:tuning-grids}).

\subsection{Information retained under each kernel}
\label{app:information}

Section~\ref{sec:binary-schedule} and Figure~\ref{fig:information} consider independent positions whose clean tokens are uniform on $[V]$.
The other positions then carry no information about $Z_0^i$, so the true cavity is uniform, $q(Z_0^i=c\mid Z_t^{-i})=1/V$, and the information in the noisy sequence is a sum over positions of
\begin{equation}
  I(Z_0^i;Z_t^i)=\log V-\mathbb E\,\big[H\big(q(Z_0^i\mid Z_t^i)\big)\big],
  \label{eq:app-information}
\end{equation}
where the posterior of the clean token is the denoiser~\eqref{eq:binary-clean-distribution} with a uniform $\mu_i^\theta$, the forward kernel $q_{t\mid0}(Z_t^i\mid c)$ normalized over $c$.
\begin{itemize}[leftmargin=*]
  \item \textbf{Masked.} The posterior is a point mass at an unmasked token and uniform at a mask, so $I=(1-t)\log V$.
  \item \textbf{Uniform.} Given the noisy token, the posterior assigns $a=1-t+t/V$ to it and $b=t/V$ to every other token, so $I=\log V+a\log a+(V-1)\,b\log b$.
  \item \textbf{Binary absorbing.} The posterior is uniform on the active set, whose size is $1+K$ with $K\sim\mathrm{Bin}(V-1,t)$, so $I=\log V-\mathbb E[\log(1+K)]\ge\log V-\log(1+(V-1)t)$ by Jensen's inequality.
  \item \textbf{Binary symmetric.} A state $z$ differs from the clean token $c$ in $d_H(z,c)=|z|+1-2z_c$ bits, so the likelihood $q_{t\mid0}^{\mathrm{sym}}(z\mid c)$ is proportional to $\rho_t^{-d_H(z,c)}\propto\rho_t^{2z_c}$ as a function of $c$, and with a uniform prior the posterior weighs active candidates by $\rho_t^2$ and inactive tokens by one, as~\eqref{eq:app-binary-symmetric-decode} with a uniform $\mu_i^\theta$.
        A state with $k$ active candidates therefore has posterior entropy
        \begin{equation}
          H_k=\log(k\rho_t^2+V-k)-\frac{k\rho_t^2\log\rho_t^2}{k\rho_t^2+V-k},
        \end{equation}
        and $I=\log V-\mathbb E H_{B+K}$ with independent $B\sim\mathrm{Ber}(1-t/2)$, the clean bit, and $K\sim\mathrm{Bin}(V-1,t/2)$, the false candidates.
        For small $t$, $\rho_t^2$ is large and the posterior is nearly uniform on the active set, so that $I\approx\log V-\log(1+(V-1)t/2)$, as in the absorbing case with half as many false candidates.
\end{itemize}
These expressions give the curves of Figure~\ref{fig:information}.
With $V=1024$ on the window $[5\cdot10^{-4},0.9999]$, the linear schedule draws 95\% of its times where less than half of the information is left under binary symmetric diffusion and 97\% under binary absorbing diffusion, against 50\% under masked and 60\% under uniform diffusion.
The logarithmic schedule lowers these fractions to 39\% and 46\% for the binary kernels.
On the other hand, with $V=9$, half of the information is lost at $t\approx0.27$ under binary symmetric, $0.28$ under binary absorbing, and $0.29$ under uniform diffusion, so the binary kernels lose information about as fast as uniform diffusion.

\subsection{Adaptive gradient-RMS importance sampling}
\label{app:adaptive-time-sampling}

Adaptive importance sampling replaces the base schedule by a proposal that draws more samples where loss gradients are large.
It improved over the base schedule alone in all our experiments, so we train every model, for all four kernels and all tasks, with it.
The statistic is the root-mean-square norm of the gradient with respect to the network's output logits, used as an inexpensive proxy for parameter-gradient norms.

We divide the range of $\xi$ into $K=32$ equal-width bins.
Let $g_\theta$ denote a sequence's contribution to the loss under the base schedule, including token normalization and the base-schedule weight, and let $\ell$ denote its output logits.
For the path objective with $L$ scored tokens per sequence, this contribution is
\begin{equation}
  g_\theta=\frac{\mathcal J_t(\theta;Z_0,Z_t)}{L\,p_{\mathrm{base}}(t)},
  \qquad p_{\mathrm{base}}\in\{p_{\mathrm{lin}},p_{\log},p_{\cos}\}.
  \label{eq:app-adaptive-base-loss}
\end{equation}
When scored token counts vary across sequences, we replace $L$ by their minibatch mean.
In each bin $k$, we maintain exponentially decayed sums $S_k$ of $\|\nabla_\ell g_\theta\|_2^2$ and counts $C_k$, both with decay $0.99$.
Their ratio gives the mean squared gradient norm, from which we form the RMS estimate and the bin probability:
\begin{equation}
  G_k=\sqrt{\frac{S_k}{C_k}},
  \qquad
  P_k=0.9\frac{G_k}{\sum_{j=1}^K G_j}+\frac{0.1}{K}.
  \label{eq:app-adaptive-bin-probabilities}
\end{equation}
Sampling uniformly within each selected bin gives the time density and path-loss weight
\begin{equation}
  p(t)=K P_k\,p_{\mathrm{base}}(t),
  \qquad
  \frac1{p(t)}=\frac1{K P_k\,p_{\mathrm{base}}(t)},
  \qquad \xi(t)\text{ in bin }k.
  \label{eq:app-adaptive-time-density}
\end{equation}
The additional factor $1/(K P_k)$ preserves the objective under the base schedule.
Mixing in $10\%$ of the base schedule keeps every bin accessible and bounds this factor by $10$.

For the first 100 minibatches we sample from the base schedule while collecting statistics.
Each minibatch uses the preceding proposal, held fixed during differentiation, and updates the statistics for the next minibatch.
We stratify the times within a minibatch in the proposal's cumulative-probability coordinate.
For validation, we draw times from the base schedule.

\section{Confidence-guided \texorpdfstring{$\Psi$}{Psi}-sampler}
\label{app:psi-sampler}

This appendix specifies each setting of Algorithm~\ref{alg:binary-sampling} for all four kernels.
One step maps the state $Z_t$ to $Z_s$, $s<t$, on a sampling grid with linear, cosine, or logarithmic spacing (App.~\ref{app:binary-clocks}).
We hold conditioning positions, such as Sudoku clues, fixed and exclude them from every average over positions below.

\paragraph{Temperature $T$ and nucleus top-$p$.}
For every draw from the denoiser, for the clean proposal, inside the ancestral update, and in the final decoding, we use the tempered denoiser
\begin{equation}
  \pi_i^{\theta,T}(c\mid Z_t;t)\propto\pi_i^\theta(c\mid Z_t;t)^{1/T},
  \label{eq:app-tempered-denoiser}
\end{equation}
which on text we further restrict to its top-$p$ nucleus and renormalize, and $T=0$ selects the most probable token.
The clean proposal $\Ztilde{}\sim\pi_i^{\theta,T}(\cdot\mid Z_t;t)$ is drawn independently at each position, and the ancestral update draws its own proposal $\Zhat{}$ in the same way.

\paragraph{Re-corrupt fraction $\eta$.}
Each position is re-corrupted with probability $\kappa_i$ and otherwise takes the ancestral update~\eqref{eq:binary-ancestral}, as in~\eqref{eq:binary-psi-sampler}.
Without position confidence, $\kappa_i=\eta$ at every position, so that $\eta=0$ is ancestral sampling and $\eta=1$ re-corrupts every position at every step.

\paragraph{Confidence coordinate.}
Position confidence and the ordering tilt read the confidence in the proposal from the cavity or from the denoiser,
\begin{equation}
  \mathrm{conf}_i=\mu_i^\theta(\Ztilde{}\mid Z_t;t)
  \qquad\text{or}\qquad
  \mathrm{conf}_i=\pi_i^\theta(\Ztilde{}\mid Z_t;t).
  \label{eq:app-confidence}
\end{equation}
The cavity reflects the context alone, while the denoiser also accounts for the position's own noisy state.

\paragraph{Position confidence $\gamma$.}
The re-corruption probabilities decrease with confidence,
\begin{equation}
  \kappa_i=\min\{1,\,\zeta\,e^{-\gamma\,\mathrm{conf}_i}\},
  \label{eq:app-position-confidence}
\end{equation}
where we set the constant $\zeta$ by bisection so that the mean of $\kappa_i$ over positions equals $\eta$.
Positions capped at one thus pass their excess to the others, and $\gamma=0$ gives $\kappa_i=\eta$.
For masked diffusion, masked positions keep $\kappa_i=\eta$, and the redistribution runs over the unmasked positions only, each with the confidence in its token recorded when it was unmasked, since the network is not trained at unmasked positions.

\paragraph{Ordering tilt $\lambda$.}
A re-corrupted position returns to its own noise level
\begin{equation}
  s_i=\min\{1,\,\zeta'\,e^{-\lambda\,\mathrm{conf}_i}\}
  \label{eq:app-ordering-tilt}
\end{equation}
instead of $s$, with $\zeta'$ set so that the mean of $s_i$ over positions equals $s$.
Less confident positions are thus corrupted more while the mean noise level follows the grid, and $\lambda=0$ gives $s_i=s$.

\paragraph{Candidate confidence $\beta$.}
Plain re-corruption opens alternatives uniformly at random, as the forward kernel does.
With candidate confidence, the binary kernels corrupt the proposal's own bit as the forward kernel does, dropping it with probability $s_i/2$ under binary symmetric diffusion and keeping it under binary absorbing diffusion, and open each other token $u\ne\Ztilde{}$ independently with probability $\min\{1,\zeta_i\,p_{i,u}^\beta\}$, where $\zeta_i$ is set so that
\begin{equation}
  \sum_{u\ne\Ztilde{}}\min\{1,\,\zeta_i\,p_{i,u}^\beta\}
  =
  \begin{cases}
    (V-1)s_i/2, & \text{binary symmetric}, \\
    (V-1)s_i,   & \text{binary absorbing}.
  \end{cases}
  \label{eq:app-candidate-confidence}
\end{equation}
The expected number of opened alternatives is thus that of the forward kernel, which $\beta=0$ recovers.
The weight $p_{i,u}$ is the denoiser probability of $u$ at the most recent step at which $u$ was a candidate at position $i$, starting from $1/V$, because the binary absorbing denoiser vanishes outside the current set.
Without this memory, reopening in proportion to the current cavity or denoiser probability lowers binary symmetric diffusion's best Sudoku validation solve rate over the same grid to 89.6\% and 85.2\%, below position confidence alone (95.2\%).
Uniform diffusion re-corrupts a position by changing its token with the forward probability $(V-1)s_i/V$ and draws the new token $u\ne\Ztilde{}$ with probability proportional to $\pi_i^\theta(u\mid Z_t;t)^\beta$, so that $\beta=0$ is again the forward kernel.
Masked diffusion has no candidate to reopen.

\paragraph{Final decoding.}
After the last step, we decode each position from the tempered denoiser at $t_1$, keeping the unmasked tokens of masked diffusion.

\section{Extended related work}
\label{app:extended-related-work}

\paragraph{Discrete diffusion and corruption-state design.}
Several lines of work broaden the forward path without introducing candidate sets.
Generalized interpolating diffusion and kinetic path design admit richer categorical couplings~\citep{vonruette2025gidd,shaul2025kinetic}, partial masking inserts intermediate corruption states between a token and a full mask~\citep{chao2025partialmasking}, and sticky jump processes combine discrete jumps with continuous coordinates~\citep{jutrasdube2026stickyjump}.
CADD and CANDI go further and augment or hybridize the categorical state with a continuous one~\citep{zheng2025cadd,pynadath2025candi}.
Even at scale, and with architectures and training budgets matched, the choice between masking, uniform corruption, and interpolating paths remains consequential~\citep{sahoo2026scaling}.

There is also a substantial simplex-valued lineage, which lifts categories to continuous points on the simplex or updates distribution parameters directly~\citep{richemond2022categorical,floto2023simplex,avdeyev2023dirichlet,stark2024dirichlet,graves2023bayesian,chandra2026unification}.
Simplax is the closest of these to our setting: it keeps uniform categorical diffusion as its marginal and adds an exact Dirichlet auxiliary variable to the reverse bridge~\citep{sakurai2026simplax}.
Its denoiser, however, still observes a corrupted category, whereas binary diffusion exposes the candidate set to the denoiser as the noisy state.

\paragraph{Binary diffusion and candidate-set representations.}
Prior binary diffusion methods treat each bit as part of the object being generated.
Bit-flip diffusion of binary data goes back to binomial diffusion~\citep{sohldickstein2015deep}, and DMPM studies its continuous-time form~\citep{pham2025dmpm}.
BerDiff applies a Bernoulli kernel to binary segmentation masks~\citep{chen2023berdiff}, GraphGUIDE adds, removes, or flips graph edges through Bernoulli transitions~\citep{tseng2023graphguide}, and Blackout diffusion models discrete counts through birth--death processes~\citep{santos2023blackout}.
Analog Bits instead encodes one category in $\lceil\log_2 V\rceil$ real-valued bit coordinates and applies continuous diffusion~\citep{chen2022analogbits}.
In binary diffusion, by contrast, there are $V$ binary coordinates per position and an active bit denotes membership in the noisy subset.

The ShortListing Model (SLM) is the closest predecessor~\citep{song2025shortlisting}.
It represents a categorical variable by a nonempty candidate set, starts the forward process from the clean singleton, activates false candidates until reaching the full vocabulary, and samples in reverse by progressive candidate pruning.
However, the model is formulated in discrete time and identifies sets with simplex-face centroids.
To see the correspondence with binary absorbing diffusion, let $n_t$ be SLM's expected candidate-set size at time $t$ and take the linear noise schedule
\begin{equation}
  n_t = 1 + (V-1)t.
\end{equation}
The probability of adding a candidate between times $s<t$ is then
\begin{equation}
  \frac{n_t-n_s}{V-n_s}=\frac{t-s}{1-s},
\end{equation}
and the clean-conditioned probability of retaining a false candidate is
\begin{equation}
  \frac{n_s-1}{n_t-1}=\frac{s}{t},
\end{equation}
which are the absorbing kernel~\eqref{eq:app-binary-absorbing-kernel} and bridge~\eqref{eq:app-binary-absorbing-bridge}.

\paragraph{Diffusion for structured reasoning.}
Beyond Autoregression established masked diffusion baselines on Sudoku and other reasoning and planning tasks~\citep{ye2024beyondautoregression}.
Dirichlet diffusion solves Sudoku in a continuous simplex state~\citep{avdeyev2023dirichlet}, and Simplax reports Sudoku generalization across clue densities~\citep{sakurai2026simplax}.
Several analyses show that the corruption and the order of updates matter as much as the model: Spatial Reasoning Models find that the order of variable updates strongly affects accuracy~\citep{wewer2025spatialreasoning}, and the locality induced by the corruption and factorization changes trainability on Sudoku and graph path-finding~\citep{wang2026blockwiselocality}.
On the theoretical side, masked diffusion models can simulate chain-of-thought under finite precision~\citep{svete2025reasoningmdm}.

A separate line of work changes the reverse-time policy rather than the forward state, through token ordering, planned denoising and path planning, remasking, learned reveal schedules, or search during and around sampling~\citep{kim2025tokenordering,liu2025planneddenoising,peng2025pathplanning,wang2025remaskingdiscretediffusion,kim2025prism,xu2026schedulingthoughts,ta2026searchdiff,bai2026prism,zhang2025vfscale}.
These methods compose with any corruption kernel, and beyond the confidence-guided re-corruption of Section~\ref{sec:binary-sampling} we use none of them: our verifier-free solve rate refers to direct samples from each kernel's tuned sampler, without search, reranking, or a symbolic solver.

\paragraph{Recurrent reasoning.}
Sudoku and related benchmarks have also driven a rapidly growing literature on small recurrent reasoners, which allocate computation through repeated latent updates, fixed-point dynamics, or looped layers~\citep{wang2025hrm,trm2025,ptrm2026,huang2026equilibrium,feinashley2026attractors,helbling2026flowreasoningmodelsscaling,cameron2026denoisingrecursion,lee2026loopmdm}.
These approaches change the architecture, the training dynamics, or the recurrent depth, which is a separate axis from the corruption state that binary diffusion changes, so we do not compare against them directly.

\section{Experimental details}\label{app:experimental-details}

\subsection{Optimization details}
\label{app:optimization}

Tables~\ref{tab:logic-config} and~\ref{tab:text-config} give the architectures and optimization settings of the models.
We embed candidate sets by~\eqref{eq:binary-embedding}. Dividing the sum by $|z|$ instead, or embedding the signed vector $(2z-1)/V$, made little difference in our experiments.

\begin{table}[htbp]
  \caption{Training settings of the logic models.}
  \label{tab:logic-config}
  \centering\small
  \setlength{\tabcolsep}{4pt}
  \begin{tabular}{@{}lp{0.66\linewidth}@{}}
    \toprule
    \multicolumn{2}{@{}l}{\textbf{Architecture}}                                                                                           \\
    \midrule
    transformer layers      & 12                                                                                                           \\
    hidden width            & 512                                                                                                          \\
    attention heads         & 8                                                                                                            \\
    MLP width               & 2048                                                                                                         \\
    time-conditioning width & 128                                                                                                          \\
    conditioning            & Sudoku: clue embedding; Boxoban: board prefix; graph coloring: adjacency in attention                        \\
    sequence length         & Sudoku 81; Boxoban $100+128$ (flattened board + solution); graph coloring 64                                 \\
    precision               & mixed: float32 parameters, normalization, logits, and loss; bfloat16 matmuls, attention, and residual stream \\
    dropout                 & 0                                                                                                            \\
    \midrule
    \multicolumn{2}{@{}l}{\textbf{Optimization}}                                                                                           \\
    \midrule
    optimizer               & AdamW, $(\beta_1,\beta_2)=(0.9,0.999)$                                                                       \\
    learning rate           & $10^{-4}$; Boxoban $3\cdot10^{-4}$ except binary symmetric                                                   \\
    schedule                & constant after 1,000 warm-up steps                                                                           \\
    batch size              & 512                                                                                                          \\
    training steps          & Sudoku 1M, Boxoban 2.5M, graph coloring 300k                                                                 \\
    weight decay            & 0.05, excluding norms and embeddings                                                                         \\
    gradient clip           & 1                                                                                                            \\
    EMA decay               & 0.9999                                                                                                       \\
    training seeds          & 5                                                                                                            \\
    \midrule
    \multicolumn{2}{@{}l}{\textbf{Diffusion}}                                                                                              \\
    \midrule
    objective               & path ELBO, cavity parameterization                                                                           \\
    time window             & $[0.01,0.9999]$                                                                                              \\
    base schedule           & linear, low-discrepancy sampling                                                                             \\
    adaptive sampling       & gradient RMS, EMA 0.99                                                                                       \\
    \bottomrule
  \end{tabular}
\end{table}

\begin{table}[htbp]
  \caption{Training settings of the text models.}
  \label{tab:text-config}
  \centering\small
  \setlength{\tabcolsep}{4pt}
  \begin{tabular}{@{}lp{0.66\linewidth}@{}}
    \toprule
    \multicolumn{2}{@{}l}{\textbf{Architecture}}                                                                                           \\
    \midrule
    transformer layers      & 12; OpenWebText 16                                                                                           \\
    hidden width            & 768; OpenWebText 1024                                                                                        \\
    attention heads         & 12; OpenWebText 16                                                                                           \\
    MLP width               & 3072; OpenWebText 4096                                                                                       \\
    time-conditioning width & 128                                                                                                          \\
    sequence length         & Text8 256, TinyStories 512, OpenWebText 1024                                                                 \\
    vocabulary              & Text8 27, TinyStories 1024, OpenWebText 1024                                                                 \\
    precision               & mixed: float32 parameters, normalization, logits, and loss; bfloat16 matmuls, attention, and residual stream \\
    dropout                 & 0                                                                                                            \\
    \midrule
    \multicolumn{2}{@{}l}{\textbf{Optimization}}                                                                                           \\
    \midrule
    optimizer               & AdamW, $(\beta_1,\beta_2)=(0.9,0.999)$                                                                       \\
    learning rate           & $3\cdot10^{-4}$                                                                                              \\
    schedule                & constant after 2,500 warm-up steps                                                                           \\
    batch size              & 512                                                                                                          \\
    training steps          & Text8 1M, TinyStories 500k, OpenWebText 250k                                                                 \\
    weight decay            & 0.02, excluding norms and embeddings                                                                         \\
    gradient clip           & 1                                                                                                            \\
    EMA decay               & 0.9999                                                                                                       \\
    training seeds          & 1                                                                                                            \\
    \midrule
    \multicolumn{2}{@{}l}{\textbf{Diffusion}}                                                                                              \\
    \midrule
    objective               & path ELBO, cavity parameterization; Text8: xent for masked, wxent for binary absorbing                       \\
    time window             & Text8 $[0.001,0.9999]$; TinyStories and OpenWebText $[0.0005,0.9999]$                                        \\
    base schedule           & linear; binary kernels logarithmic; low-discrepancy sampling                                                 \\
    adaptive sampling       & gradient RMS, EMA 0.99                                                                                       \\
    \bottomrule
  \end{tabular}
\end{table}

\subsection{Learning-rate and checkpoint selection}
\label{app:lr-selection}

We selected the learning rate of each task and kernel from $\{10^{-4},3\cdot10^{-4},10^{-3}\}$ on the validation NELBO of App.~\ref{app:likelihood-evaluation}, computed on 51,200 validation draws and averaged over two training seeds (Table~\ref{tab:lr-selection}).
Within each task, kernel, and seed, all learning rates share the data, times, and corruptions, so their differences have a Monte Carlo standard error below $3\cdot10^{-4}$, and the variation between seeds dominates.
We tune Sudoku and graph coloring at their full training budget and Boxoban at 1M steps.
On Boxoban uniform and binary absorbing, the two lowest learning rates are within seed noise, and in both cases we keep the middle rate $3\cdot10^{-4}$.
Since Boxoban validation NELBO was still improving slowly at that point, we trained the selected rates for 2.5M steps and selected a checkpoint for each run on the validation NELBO, evaluated every 100k steps from 1M.
We keep the final checkpoint unless an earlier one is lower by more than two standard errors of the paired difference, and only two binary absorbing seeds select an earlier checkpoint, at 2.0M and 1.0M steps.

\begin{table}[ht]
  \centering
  \footnotesize
  \setlength{\tabcolsep}{4pt}
  \caption{Validation NELBOs for the logic learning-rate sweep in nats per scored token, mean over two training seeds with half their difference in parentheses.}
  \label{tab:lr-selection}
  \begin{tabular}{@{}llrrr@{}}
    \toprule
    Task           & Kernel           & $10^{-4}$                & $3\!\cdot\!10^{-4}$      & $10^{-3}$       \\
    \midrule
    Sudoku         & Binary symmetric & \textbf{0.0163} (0.0005) & 0.0175 (0.0002)          & 0.0230 (0.0007) \\
                   & Uniform          & \textbf{0.0162} (0.0013) & 0.0179 (0.0001)          & 0.0226 (0.0005) \\
                   & Binary absorbing & \textbf{0.0148} (0.0001) & 0.0178 (0.0010)          & 0.0229 (0.0011) \\
                   & Masked           & \textbf{0.0154} (0.0004) & 0.0176 (0.0002)          & 0.0245 (0.0009) \\
    \midrule
    Boxoban        & Binary symmetric & \textbf{0.0286} (0.0017) & 0.0306 (0.0034)          & 0.0358 (0.0006) \\
                   & Uniform          & 0.0267 (0.0026)          & \textbf{0.0256} (0.0009) & 0.0332 (0.0007) \\
                   & Binary absorbing & \textbf{0.0290} (0.0005) & 0.0309 (0.0005)          & 0.0394 (0.0002) \\
                   & Masked           & 0.0294 (0.0001)          & \textbf{0.0266} (0.0002) & 0.0338 (0.0009) \\
    \midrule
    Graph coloring & Binary symmetric & \textbf{0.0863} (0.0001) & 0.0873 (0.0000)          & 0.0905 (0.0000) \\
                   & Uniform          & \textbf{0.0857} (0.0000) & 0.0865 (0.0002)          & 0.0896 (0.0002) \\
                   & Binary absorbing & \textbf{0.0881} (0.0001) & 0.0898 (0.0001)          & 0.0951 (0.0003) \\
                   & Masked           & \textbf{0.0837} (0.0001) & 0.0845 (0.0001)          & 0.0880 (0.0005) \\
    \bottomrule
  \end{tabular}
\end{table}

\begin{table}[htbp]
  \centering
  \footnotesize
  \setlength{\tabcolsep}{4pt}
  \caption{Test NELBOs of the final logic models in nats per scored token, mean and standard error over the five training seeds.}
  \label{tab:logic-test-nelbo}
  \begin{tabular}{@{}lrrrr@{}}
    \toprule
    Task           & Binary symmetric         & Uniform                  & Binary absorbing & Masked                   \\
    \midrule
    Sudoku         & 0.0165 (0.0003)          & \textbf{0.0160} (0.0004) & 0.0161 (0.0001)  & 0.0166 (0.0002)          \\
    Boxoban        & \textbf{0.0224} (0.0002) & 0.0235 (0.0005)          & 0.0278 (0.0004)  & 0.0236 (0.0002)          \\
    Graph coloring & 0.0860 (0.0001)          & 0.0857 (0.0001)          & 0.0877 (0.0001)  & \textbf{0.0837} (0.0000) \\
    \bottomrule
  \end{tabular}
\end{table}

\subsection{Evaluation}
\label{app:evaluation-setup}

\paragraph{NELBO.}
\label{app:likelihood-evaluation}
We evaluate the finite-window NELBO~\eqref{eq:app-finite-window-nelbo} on the training window with the estimators used in training.
For the path term, we draw one time per sequence from the training time schedule with inverse-density weighting.
We estimate reconstruction from forward corruptions at $t_1$ with the denoiser as decoder, and visible tokens in masked diffusion are copied exactly and have zero reconstruction loss.
The prior is the limiting distribution from which generation starts.
Its KL from the time-$t_2$ corruption is analytic for uniform and binary symmetric diffusion (Apps.~\ref{app:uniform} and~\ref{app:binary-symmetric}).
For masked and binary absorbing diffusion the prior is a point mass, so that the KL is infinite for $t_2<1$ and zero at $t_2=1$, and we omit it.
We aggregate the total NELBO over scored tokens, excluding padding and conditioning clues.

\paragraph{Logic solve rate.}
\label{app:logic-evaluation}
We score logic models on direct samples, with one attempt per instance and no verifier or selection among attempts, so the reported solve rate is single-attempt success.
Sudoku success requires the completed reference grid, and Boxoban success requires legal actions that end with every box on a goal, with the action sequence read up to the first \texttt{STAY}, the no-operation action that pads solutions.
Graph coloring accepts any proper coloring.
We evaluate the selected EMA checkpoint of each of the five training seeds (App.~\ref{app:lr-selection}) with a linear sampling grid on $[0.01,0.9999]$ and the sampler selected on validation at the reference NFE, 256 for Sudoku and Boxoban and 64 for graph coloring (App.~\ref{app:sampler-tuning}).
The tables use the full test splits, and the NFE and training curves use 8,192 and 4,096 test instances, except in their hard columns, which use every hard test instance.
We evaluate every kernel on the same instances and fix the difficulty bins on validation data.
The hard subsets are defined with the data in App.~\ref{app:logic-tasks}.
We report the mean and standard error over training seeds.

\paragraph{Text sample quality.}
\label{app:text-quality-evaluation}
We tune every sampler setting on the validation split and evaluate on the test split of each corpus (App.~\ref{app:text-datasets}).
After removing special tokens, we retokenize complete decoded strings with GPT-2 Large for GenPPL and MAUVE features.
Each sample set holds 2,000 samples from one generation seed, and each reported value averages the two generation seeds of a sampler setting.
Entropy is the unigram entropy, in nats, of the GPT-2 tokens pooled over a sample set rather than a within-string average, so it measures the diversity of the set as a whole.
GenPPL is the exponential of the total GPT-2 Large negative log-likelihood of the sample set divided by its total token count, so it is pooled over tokens rather than averaged over strings.
We compute MAUVE with 200 buckets and average it over four reference draws and five clustering seeds, with reference subsets sampled without replacement and of the size of the generated sample set.
Error bars span the two generation seeds and condition on the checkpoint.

\subsection{Sampler tuning procedure}
\label{app:sampler-tuning}

We sample every kernel with the confidence-guided $\Psi$-sampler of App.~\ref{app:psi-sampler}.
Its settings are the temperature $T$, the re-corrupt fraction $\eta$, the confidence coordinate, the position-confidence strength $\gamma$, the ordering tilt $\lambda$, and the candidate-confidence strength $\beta$, which we tune for every kernel but masked diffusion on the logic tasks and for the binary kernels on text.
On text, we also tune the nucleus top-$p$ and the spacing of the sampling grid.
We apply temperature and top-$p$ to the denoiser rather than to the cavity~\citep{casado2026discretediffusionlearn}.

For each task and kernel, we select these settings on validation over the grids of Table~\ref{tab:tuning-grids}.
On the logic tasks the score is the mean solve rate over the training seeds on a fixed validation split of 2,048 instances (1,024 for the large graphs) at the reference NFE.
On text it is MAUVE at NFE $=L$, screened with 512 samples and confirmed with 2,000 samples for the three best candidates.
We then freeze the selection for every test measurement, budget, and training seed unless otherwise stated.
We select the sampler of the hard columns separately on the hard subsets of the validation splits.
The grids are not full products: the ordering and confidence strengths enter through a few sub-grids, and we give masked diffusion, which has no candidate to reopen, a matching number of extra ordering and re-corruption points in place of the candidate-confidence settings (Table~\ref{tab:tuning-grids}), so we tune every kernel with the same budget.
Because the cavity and the denoiser coincide for masked diffusion, the confidence coordinate has no effect there.
Table~\ref{tab:logic-sampling} selects over nested subsets of the logic grid: ancestral sampling, re-corruption alone, the $\Psi$-sampler without confidence, with position confidence, and with every setting, so each row reports what its settings add.
Candidate confidence lifts binary symmetric diffusion on Sudoku and binary absorbing diffusion on the large graphs, whereas it is only selected for uniform diffusion on the Boxoban hard subset.
Tables~\ref{tab:tuning-choices-logic} and~\ref{tab:tuning-choices-text} list the selected settings.

Figure~\ref{fig:text-frontier} measures each kernel at several entropies: we bin the sampler settings screened on validation by GPT-2 token entropy, evaluate the best of each bin on the test split with 2,000 samples and two generation seeds, and draw the upper envelope of these points over entropy, on which the validation-selected setting lies for every kernel.

\begin{table}[htb]
  \centering
  \caption{Sampler settings tuned on validation.}
  \label{tab:tuning-grids}
  \footnotesize
  \setlength{\tabcolsep}{4pt}
  \resizebox{\linewidth}{!}{%
    \begin{tabular}{lll}
      \toprule
      Setting                      & Logic tasks                                & Text                                       \\
      \midrule
      Temperature $T$              & $\{0, 0.5, 1\}$                            & $\{0.8, 0.9, 1, 1.1\}$                     \\
      Re-corrupt fraction $\eta$   & $\{0, 0.02, 0.05, 0.1, 0.2, 0.5, 0.8, 1\}$ & $\{0, 0.01, 0.05, 0.1, 0.2, 0.5, 0.8, 1\}$ \\
      \quad masked also            & $\{0.9\}$                                  &                                            \\
      Position confidence $\gamma$ & $\{0, 1, 3, 10\}$                          & $\{0, 1, 3, 5, 10\}$                       \\
      Ordering tilt $\lambda$      & $\{0, 1, 3, 10, 30, 100\}$                 & $\{0, 1, 3, 10\}$                          \\
      \quad masked also            & $\{2, 5, 7, 15, 20, 50, 300, 1000\}$       &                                            \\
      Candidate confidence $\beta$ & $\{0, 0.5, 1, 2, 3\}$, all but masked      & $\{0, 0.5, 1, 2\}$, binary                 \\
      Confidence coordinate        & cavity, denoiser                           & cavity, denoiser                           \\
      Grid spacing                 & linear                                     & cosine, logarithmic                        \\
      Nucleus top-$p$              & 1                                          & $\{0.9, 1\}$                               \\
      \bottomrule
    \end{tabular}
  }
\end{table}

\begin{table}[htbp]
  \centering
  \caption{Selected sampler settings on the logic tasks, on the full validation splits and on the hard subsets. The confidence coordinate (coord.) is the cavity (c) or the denoiser (d).}
  \label{tab:tuning-choices-logic}
  \small
  \begin{tabular}{llcccccc}
    \toprule
    Kernel                                             & Setting   & \multicolumn{1}{c}{\shortstack{Sudoku\\All}} & \multicolumn{1}{c}{\shortstack{Sudoku\\Hard}} & \multicolumn{1}{c}{\shortstack{Boxoban\\All}} & \multicolumn{1}{c}{\shortstack{Boxoban\\Hard}} & \multicolumn{1}{c}{\shortstack{Graph coloring\\All}} & \multicolumn{1}{c}{\shortstack{Graph coloring\\Hard}} \\
    \midrule
    \multirow{6}{*}{\shortstack[l]{Binary\\symmetric}} & $T$       & 0                                            & 0                                             & 0                                             & 1                                              & 1                                                    & 0                                                     \\
                                                       & $\eta$    & 1                                            & 1                                             & 1                                             & 1                                              & 0.8                                                  & 1                                                     \\
                                                       & coord.    & c                                            & c                                             & c                                             & c                                              & c                                                    & c                                                     \\
                                                       & $\gamma$  & 0                                            & 0                                             & 0                                             & 0                                              & 0                                                    & 0                                                     \\
                                                       & $\lambda$ & 0                                            & 0                                             & 3                                             & 3                                              & 0                                                    & 3                                                     \\
                                                       & $\beta$   & 0.5                                          & 0.5                                           & 3                                             & 0.5                                            & 0                                                    & 0                                                     \\
    \midrule
    \multirow{6}{*}{\shortstack[l]{Uniform}}           & $T$       & 1                                            & 1                                             & 1                                             & 1                                              & 1                                                    & 1                                                     \\
                                                       & $\eta$    & 0.5                                          & 0.5                                           & 0.5                                           & 1                                              & 0.5                                                  & 1                                                     \\
                                                       & coord.    & c                                            & c                                             & c                                             & c                                              & d                                                    & c                                                     \\
                                                       & $\gamma$  & 10                                           & 10                                            & 3                                             & 0                                              & 3                                                    & 0                                                     \\
                                                       & $\lambda$ & 0                                            & 0                                             & 3                                             & 3                                              & 0                                                    & 30                                                    \\
                                                       & $\beta$   & 0                                            & 0                                             & 0                                             & 2                                              & 0                                                    & 0                                                     \\
    \midrule
    \multirow{6}{*}{\shortstack[l]{Binary\\absorbing}} & $T$       & 0.5                                          & 0                                             & 0                                             & 0                                              & 0.5                                                  & 0                                                     \\
                                                       & $\eta$    & 1                                            & 1                                             & 1                                             & 1                                              & 0.5                                                  & 1                                                     \\
                                                       & coord.    & d                                            & d                                             & c                                             & c                                              & c                                                    & c                                                     \\
                                                       & $\gamma$  & 0                                            & 0                                             & 0                                             & 0                                              & 1                                                    & 0                                                     \\
                                                       & $\lambda$ & 1                                            & 1                                             & 1                                             & 0                                              & 3                                                    & 0                                                     \\
                                                       & $\beta$   & 0                                            & 0                                             & 0                                             & 0.5                                            & 0                                                    & 0.5                                                   \\
    \midrule
    \multirow{6}{*}{\shortstack[l]{Masked}}            & $T$       & 0                                            & 0.5                                           & 0                                             & 0                                              & 0.5                                                  & 0.5                                                   \\
                                                       & $\eta$    & 0.9                                          & 0.9                                           & 0.5                                           & 0.8                                            & 0.8                                                  & 1                                                     \\
                                                       & coord.    & c                                            & c                                             & c                                             & c                                              & c                                                    & c                                                     \\
                                                       & $\gamma$  & 0                                            & 0                                             & 10                                            & 0                                              & 3                                                    & 0                                                     \\
                                                       & $\lambda$ & 3                                            & 3                                             & 0                                             & 3                                              & 3                                                    & 2                                                     \\
                                                       & $\beta$   & 0                                            & 0                                             & 0                                             & 0                                              & 0                                                    & 0                                                     \\
    \bottomrule
  \end{tabular}
\end{table}

\begin{table}[htb]
  \centering
  \caption{Selected sampler settings on text. The confidence coordinate (coord.) is the cavity (c) or the denoiser (d).}
  \label{tab:tuning-choices-text}
  \small
  \begin{tabular}{llccc}
    \toprule
    Kernel                                             & Setting   & Text8  & TinyStories & OpenWebText \\
    \midrule
    \multirow{8}{*}{\shortstack[l]{Binary\\symmetric}} & $T$       & 1      & 0.9         & 0.9         \\
                                                       & top-$p$   & 1      & 1           & 1           \\
                                                       & spacing   & log    & cosine      & log         \\
                                                       & $\eta$    & 1      & 0.5         & 0.8         \\
                                                       & coord.    & c      & d           & c           \\
                                                       & $\gamma$  & 0      & 10          & 0           \\
                                                       & $\lambda$ & 1      & 0           & 1           \\
                                                       & $\beta$   & 0      & 0           & 0           \\
    \midrule
    \multirow{8}{*}{\shortstack[l]{Uniform}}           & $T$       & 0.9    & 1           & 0.8         \\
                                                       & top-$p$   & 1      & 1           & 0.9         \\
                                                       & spacing   & cosine & cosine      & log         \\
                                                       & $\eta$    & 0.2    & 0.2         & 0.1         \\
                                                       & coord.    & c      & c           & c           \\
                                                       & $\gamma$  & 0      & 1           & 0           \\
                                                       & $\lambda$ & 0      & 0           & 0           \\
                                                       & $\beta$   & 0      & 0           & 0           \\
    \midrule
    \multirow{8}{*}{\shortstack[l]{Binary\\absorbing}} & $T$       & 0.9    & 1           & 0.9         \\
                                                       & top-$p$   & 1      & 1           & 1           \\
                                                       & spacing   & log    & log         & log         \\
                                                       & $\eta$    & 0.8    & 0.8         & 1           \\
                                                       & coord.    & c      & d           & c           \\
                                                       & $\gamma$  & 0      & 0           & 0           \\
                                                       & $\lambda$ & 1      & 1           & 1           \\
                                                       & $\beta$   & 0      & 0           & 0           \\
    \midrule
    \multirow{8}{*}{\shortstack[l]{Masked}}            & $T$       & 0.9    & 0.9         & 0.9         \\
                                                       & top-$p$   & 1      & 1           & 0.9         \\
                                                       & spacing   & cosine & cosine      & cosine      \\
                                                       & $\eta$    & 0.05   & 0.5         & 0.01        \\
                                                       & coord.    & c      & c           & c           \\
                                                       & $\gamma$  & 3      & 1           & 0           \\
                                                       & $\lambda$ & 0      & 0           & 0           \\
                                                       & $\beta$   & 0      & 0           & 0           \\
    \bottomrule
  \end{tabular}
\end{table}

\begin{table}[htbp]
  \centering
  \caption{Solve rate (\%) at the reference NFE for each stage of the tuning procedure evaluated on the test splits and the hard subsets of Table~\ref{tab:logic-final}, with every sampler selected on the full validation splits, except in the graph-coloring hard column, which selects on the large validation graphs. For masked diffusion, the last row adds the extra ordering and re-corruption settings that replace candidate confidence.}
  \label{tab:logic-sampling}
  \resizebox{\linewidth}{!}{
{\footnotesize
\setlength{\tabcolsep}{4pt}
\begin{tabular}{@{}lrrrrrrrr@{}}
\toprule
 & \multicolumn{2}{c}{Binary symmetric} & \multicolumn{2}{c}{Uniform} & \multicolumn{2}{c}{Binary absorbing} & \multicolumn{2}{c}{Masked} \\
\cmidrule(lr){2-3}\cmidrule(lr){4-5}\cmidrule(lr){6-7}\cmidrule(lr){8-9}
Sampler & All & Hard & All & Hard & All & Hard & All & Hard \\
\midrule
\multicolumn{9}{@{}l}{\textit{Sudoku}} \\
Ancestral & 76.41 (0.33) & 59.22 (0.34) & \textbf{76.66} (0.49) & \textbf{60.37} (0.42) & 71.90 (0.24) & 52.25 (0.34) & 70.31 (0.29) & 50.75 (0.48) \\
Re-corrupt & 94.54 (0.18) & 87.78 (0.40) & \textbf{94.77} (0.31) & \textbf{88.27} (0.64) & 86.03 (0.24) & 71.75 (0.54) & 80.84 (0.09) & 64.39 (0.21) \\
$\Psi$-sampler & 94.54 (0.18) & 87.78 (0.40) & \textbf{94.77} (0.31) & \textbf{88.40} (0.68) & 86.03 (0.24) & 71.89 (0.49) & 80.49 (0.13) & 63.95 (0.24) \\
+ position conf. & 94.37 (0.18) & 87.47 (0.21) & \textbf{95.48} (0.22) & \textbf{89.90} (0.48) & 91.23 (0.11) & 81.24 (0.21) & 88.19 (0.19) & 76.56 (0.29) \\
+ candidate conf. & \textbf{96.27} (0.21) & \textbf{92.66} (0.39) & 95.47 (0.21) & 89.81 (0.47) & 91.24 (0.11) & 81.37 (0.20) & 87.96 (0.16) & 76.00 (0.10) \\
\midrule
\multicolumn{9}{@{}l}{\textit{Boxoban}} \\
Ancestral & \textbf{83.80} (0.37) & \textbf{64.17} (0.57) & 80.00 (0.81) & 55.33 (1.50) & 71.75 (0.91) & 43.50 (1.59) & 76.39 (0.37) & 50.71 (0.58) \\
Re-corrupt & \textbf{93.53} (0.39) & \textbf{81.89} (0.85) & 91.68 (0.63) & 77.35 (1.30) & 84.69 (0.66) & 66.14 (1.45) & 90.62 (0.37) & 75.87 (0.93) \\
$\Psi$-sampler & \textbf{93.53} (0.39) & \textbf{81.89} (0.85) & 91.68 (0.63) & 77.35 (1.30) & 84.66 (0.65) & 66.01 (1.38) & 90.66 (0.34) & 75.94 (0.88) \\
+ position conf. & \textbf{94.38} (0.33) & \textbf{83.60} (0.78) & 92.34 (0.61) & 78.39 (1.36) & 87.28 (0.80) & 68.25 (1.65) & 91.24 (0.31) & 76.60 (0.70) \\
+ candidate conf. & \textbf{94.29} (0.30) & \textbf{84.14} (0.61) & 92.38 (0.64) & 78.57 (1.42) & 87.30 (0.81) & 68.25 (1.65) & 91.26 (0.31) & 76.60 (0.70) \\
\midrule
\multicolumn{9}{@{}l}{\textit{Graph coloring}} \\
Ancestral & \textbf{97.47} (0.05) & 4.70 (0.24) & 97.33 (0.03) & \textbf{4.98} (0.25) & 83.17 (0.13) & 0.00 (0.00) & 63.00 (0.54) & 0.00 (0.00) \\
Re-corrupt & \textbf{99.38} (0.01) & \textbf{55.00} (1.01) & 98.94 (0.01) & 48.16 (0.92) & 88.54 (0.33) & 0.58 (0.09) & 86.95 (0.60) & 0.05 (0.03) \\
$\Psi$-sampler & \textbf{99.35} (0.00) & \textbf{55.10} (1.00) & 98.94 (0.01) & 48.24 (0.94) & 90.64 (0.19) & 0.58 (0.09) & 86.92 (0.56) & 0.05 (0.03) \\
+ position conf. & \textbf{99.35} (0.00) & \textbf{64.49} (1.20) & 99.00 (0.01) & 61.41 (0.34) & 98.83 (0.02) & 19.96 (0.97) & 97.04 (0.03) & 4.94 (0.37) \\
+ candidate conf. & \textbf{99.35} (0.00) & \textbf{64.13} (1.10) & 99.01 (0.02) & 61.41 (0.34) & 98.83 (0.02) & 54.77 (0.39) & 97.44 (0.04) & 4.85 (0.33) \\
\bottomrule
\end{tabular}
}
}
\end{table}

\subsection{Logic tasks}
\label{app:logic-tasks}

In every task, we condition the model on the given part of the instance: we show it clean, never corrupt it, and exclude it from the loss, so all randomness and revision act on the generated positions.

\paragraph{Sudoku.}
For Sudoku, we use the Sudoku-Extreme boards of \citet{wang2025hrm}, 3.8M training boards and a test pool of 423k, as 81 digit tokens with the clue cells fixed.
We mark the clue cells with an embedding added to the input.
This changed little for the masked and binary kernels but was necessary for uniform diffusion to train at all, since a clue is otherwise indistinguishable from a uniformly corrupted cell.
We partition the test pool into validation and test sets of about 200k and 223k boards that are orbit-disjoint under the augmentation symmetries (digit relabeling, board flips, and rotations), and the hard subset of the test set keeps the boards with at most 22 clues (4.1\%).

\paragraph{Boxoban.}
For Boxoban, we use the $10\times10$ levels of \citet{guez2018boxoban} with the A$^\star$ solutions of \citet{garrigaalonso2024sokoban} and keep the levels whose solution replays to a solved board.
Each sequence is the flattened board followed by 128 action tokens, the solution padded with \texttt{STAY}, and we score the whole action region.
Because the official test pool holds only 1,000 levels, we partition the 92k solved levels of the official validation pool once into validation and test sets of 25k levels each and fold the remaining 42k into the 936k training levels, and the hard subset keeps the boards whose solution needs at least 40 moves (19.8\%).

\paragraph{Graph coloring.}
Graph coloring instances are random graphs with a planted four-coloring: 64 vertices in four balanced color classes, with each cross-class edge present independently with probability $0.2$, so the mean degree is $9.6$, just above the Kesten--Stigum threshold $(k-1)^2=9$ for $k=4$ colors, below which the planted coloring is undetectable.
Any proper coloring counts as a solution, not only the planted one.
We pass the graph's adjacency matrix to the model as an attention mask and generate instances on the fly from separate seeded streams for training, validation (2,048 instances), and test (100k).
The hard subset consists of 4,096 graphs of 640 vertices at the same mean degree, with the edge probability scaled to $0.02$.

During training, we augment Sudoku with the symmetries above and Boxoban with joint board and action rotations and reflections, while validation and test examples are unaugmented.
Figure~\ref{fig:logic-difficulty} gives the solve rate by difficulty bin and by graph size.
\begin{figure}[H]
  \centering
  \includegraphics[width=\linewidth]{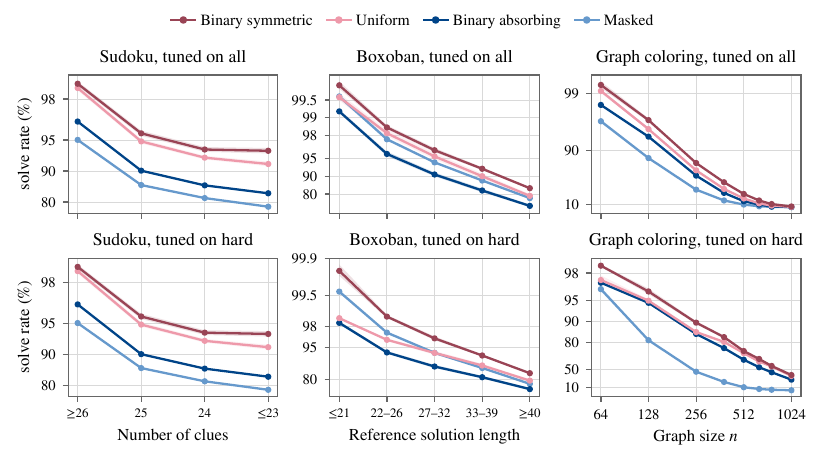}
  \caption{Solve rate by difficulty at the reference NFE, harder to the right: Sudoku by the number of clues, Boxoban by the length of the reference solution, and graph coloring by graph size at a fixed mean degree.}
  \label{fig:logic-difficulty}
\end{figure}

\subsection{Language modeling}
\label{app:text-datasets}

For all text models, we use the diffusion transformer of \citet{sahoo2024mdlm} with time conditioning, with about 92M non-embedding parameters on Text8 and TinyStories and 214M on OpenWebText, and train every run on 131B tokens.
Table~\ref{tab:text-config} lists the remaining training settings.

\paragraph{Text8.}
Text8~\citep{mahoney2011text8} is the first $10^8$ characters of a cleaned English Wikipedia dump, over 27 characters, in sequences of $L=256$.
We use the standard split into the first 90M characters for training and 5M each for validation and test.

\paragraph{TinyStories.}
TinyStories~\citep{eldan2023tinystories} is a corpus of short synthetic stories in simple English.
We tokenize it with our own byte-pair-encoding (BPE) tokenizer of $V=1024$ tokens, trained on the corpus, and hold one story per sequence of $L=512$: the story tokens, an \texttt{EOS} token, and \texttt{PAD} tokens up to $L$.
We keep the train and validation splits of the released corpus and drop the 3.9\% of training and 3.6\% of validation stories that do not fit.
This leaves 2,611,361 training stories and 26,637 validation stories, which we split evenly into validation and test sets.
During training, we corrupt and attend to padding like any other token but never score it, so the model does not learn where a story ends.
We therefore sample at a known length: each sample takes the padding positions of a validation story, and these positions follow the forward bridge $q_{s\mid t,0}(\cdot\mid Z_t^i,\texttt{PAD})$ from the prior down to $t_1$ instead of being shown clean, so the model sees them as it did in training.

\paragraph{OpenWebText.}
OpenWebText~\citep{gokaslan2019openwebtext} is an open reproduction of the WebText corpus.
We tokenize it with our own BPE tokenizer of $V=1024$ tokens, trained on the corpus, and pack documents into sequences of $L=1024$.
We wrap each document in \texttt{BOS} and \texttt{EOS} tokens, follow it with a uniformly random number of 0 to 127 \texttt{PAD} tokens, and cut the resulting stream into consecutive sequences of $L$ tokens, so a sequence can hold several documents or part of one, and we do not mask attention between documents.
As for TinyStories, we corrupt and attend to padding but never score it, and we exclude it from the likelihoods and the decoded text of App.~\ref{app:evaluation-setup}.
The corpus has no official split, so we hold out a random 100k documents before packing and split them evenly into validation and test sets, which keeps the three sets document-disjoint.
Like Text8, we sample it unconditionally.

Table~\ref{tab:text-full} gives, at the validation-selected setting of each kernel, the pooled entropy, MAUVE, and GenPPL, together with the held-out NELBO of App.~\ref{app:likelihood-evaluation}, which does not depend on the sampler.

\begin{table}[htbp]
  \centering
  \caption{Entropy $H$ (nats), MAUVE, GenPPL, and NELBO (bits per character) at the validation-selected setting. Parentheses give the standard error over the two generation seeds, and for NELBO the Monte Carlo standard error. The reference entropies are 7.308 (Text8), 5.811 (TinyStories), and 7.535 (OpenWebText).}
  \label{tab:text-full}
{\footnotesize
\setlength{\tabcolsep}{4pt}
\begin{tabular}{@{}llrrrr@{}}
\toprule
Dataset & Kernel & $H$ & MAUVE & GenPPL & NELBO \\
\midrule
Text8 & Binary symmetric & 6.805 (0.004) & \textbf{0.913} (0.001) & \textbf{186} (3) & 1.454 (0.004) \\
 & Uniform & 7.244 (0.008) & 0.896 (0.006) & 492 (1) & 1.428 (0.003) \\
 & Binary absorbing & 6.873 (0.003) & 0.821 (0.000) & 404 (7) & 1.476 (0.004) \\
 & Masked & 6.650 (0.010) & 0.854 (0.008) & 272 (3) & \textbf{1.384} (0.003) \\
\addlinespace
TinyStories & Binary symmetric & 5.676 (0.005) & 0.882 (0.006) & 9.4 (0.1) & 0.658 (0.002) \\
 & Uniform & 5.682 (0.004) & \textbf{0.904} (0.001) & 9.2 (0.0) & 0.493 (0.001) \\
 & Binary absorbing & 5.800 (0.002) & 0.585 (0.011) & 15.2 (0.0) & 0.684 (0.002) \\
 & Masked & 5.751 (0.005) & 0.850 (0.009) & \textbf{8.0} (0.0) & \textbf{0.452} (0.001) \\
\addlinespace
OpenWebText & Binary symmetric & 7.047 (0.005) & 0.840 (0.001) & \textbf{31.3} (0.1) & 1.490 (0.003) \\
 & Uniform & 7.020 (0.002) & 0.822 (0.004) & 47.4 (0.0) & 1.236 (0.002) \\
 & Binary absorbing & 7.168 (0.002) & 0.747 (0.004) & 54.6 (0.3) & 1.499 (0.002) \\
 & Masked & 7.250 (0.006) & \textbf{0.871} (0.013) & 51.0 (0.6) & \textbf{1.179} (0.002) \\
\bottomrule
\end{tabular}
}

\end{table}

Figure~\ref{fig:text-frontier-genppl} gives GenPPL at the points of Figure~\ref{fig:text-frontier}.

\begin{figure}[htbp]
  \centering
  \includegraphics[width=\linewidth]{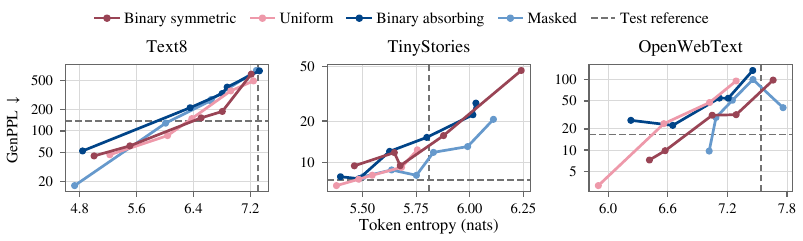}
  \caption{GPT-2 Large GenPPL against token entropy at the points of Figure~\ref{fig:text-frontier}.}
  \label{fig:text-frontier-genppl}
\end{figure}

Table~\ref{tab:text-final} reports each kernel at the setting where its own validation MAUVE is highest, while Table~\ref{tab:text-common} reports every kernel of a corpus at one entropy, chosen where their interpolated MAUVE is jointly highest.
On Text8 the kernels peak at different entropies, but the ranking does not change.
On TinyStories, uniform diffusion scores lower at the common entropy, and on OpenWebText both uniform and binary symmetric diffusion score lower while masked diffusion stays best.

\begin{table}[htbp]
  \centering
  \caption{Common-entropy operating point: the measured sample set closest to the entropy at which the corpus's kernels are jointly best (Text8 6.80, TinyStories 5.75, OpenWebText 7.25 nats). Parentheses give the standard error over the two generation seeds.}
  \label{tab:text-common}
{\footnotesize
\setlength{\tabcolsep}{4pt}
\begin{tabular}{@{}llrrr@{}}
\toprule
Dataset & Kernel & $H$ & MAUVE & GenPPL \\
\midrule
Text8 & Binary symmetric & 6.805 (0.004) & \textbf{0.913} (0.001) & \textbf{186} (3) \\
 & Uniform & 6.924 (0.003) & 0.891 (0.006) & 358 (1) \\
 & Binary absorbing & 6.803 (0.011) & 0.821 (0.007) & 329 (10) \\
 & Masked & 6.650 (0.010) & 0.854 (0.008) & 272 (3) \\
\addlinespace
TinyStories & Binary symmetric & 5.676 (0.005) & \textbf{0.882} (0.006) & 9.4 (0.1) \\
 & Uniform & 5.756 (0.007) & 0.877 (0.001) & 12.3 (0.0) \\
 & Binary absorbing & 5.800 (0.002) & 0.585 (0.011) & 15.2 (0.0) \\
 & Masked & 5.751 (0.005) & 0.850 (0.009) & \textbf{8.0} (0.0) \\
\addlinespace
OpenWebText & Binary symmetric & 7.288 (0.013) & 0.800 (0.006) & \textbf{31.8} (0.5) \\
 & Uniform & 7.287 (0.010) & 0.808 (0.002) & 96.0 (0.3) \\
 & Binary absorbing & 7.168 (0.002) & 0.747 (0.004) & 54.6 (0.3) \\
 & Masked & 7.250 (0.006) & \textbf{0.871} (0.013) & 51.0 (0.6) \\
\bottomrule
\end{tabular}
}

\end{table}

\section{Sudoku solve trajectories}\label{app:sudoku-trajectories}

We show one sample per kernel on the first Sudoku test board in evaluation order, which has 24 clues, in Figure~\ref{fig:sudoku-trajectories}.
We generate the samples with the seed-0 models of Table~\ref{tab:logic-final}, using ancestral sampling at temperature $T=1$ and 256 NFE.
All four kernels solve this board.

\begin{figure}[p]
  \centering
  \includegraphics[width=\linewidth]{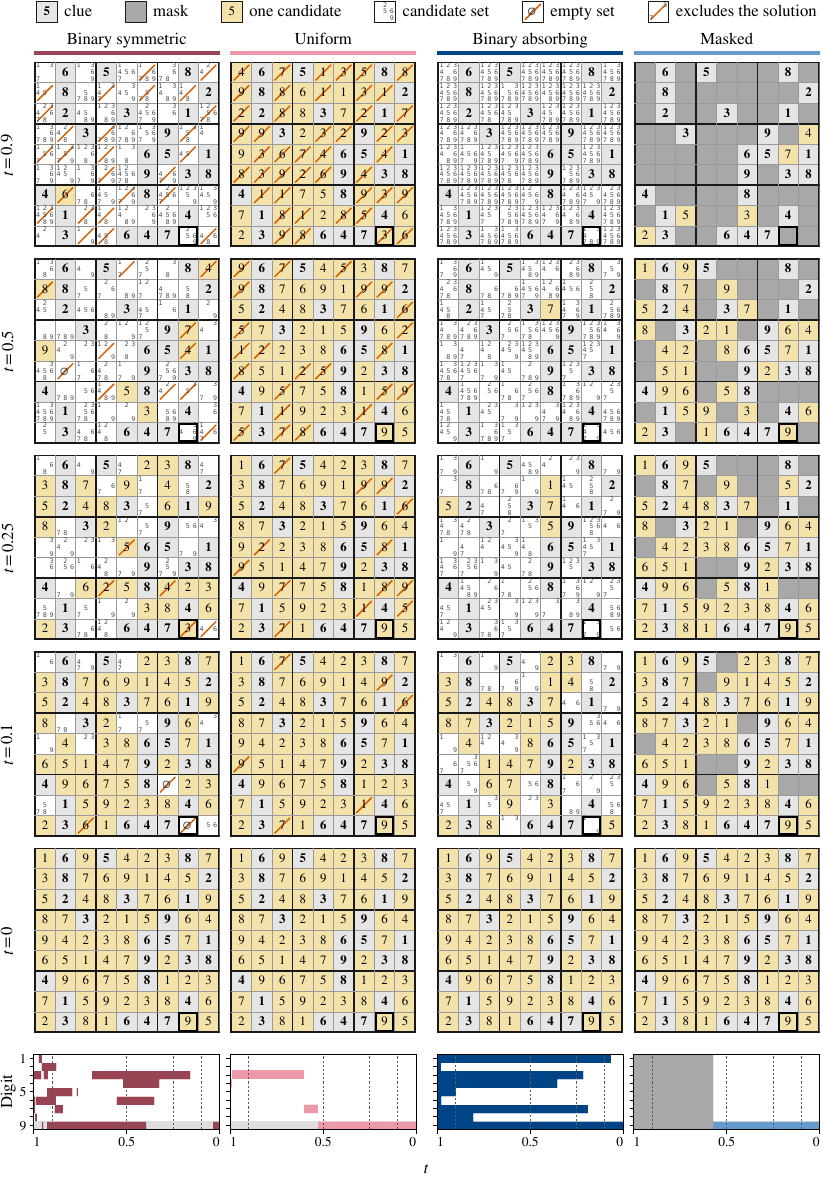}
  \caption{Candidate sets of the first Sudoku test board from $t=0.9$ to the generated sample at $t=0$, with digit $d$ of a set at keypad position $d$.
    Bottom: the candidate set of the outlined cell at every step, with the solution's digit on a light gray band.}
  \label{fig:sudoku-trajectories}
\end{figure}

\clearpage

\section{Generated samples}\label{app:samples}

We show two samples per kernel from the sample sets of Table~\ref{tab:text-final}, drawn uniformly at random from the 2,000 of the first generation seed.

\subsection{Text8}

\paragraph{Binary symmetric diffusion.}
\begin{quote}\small
  ding enemies somewhat convinced by recruiting the other to plan the new world edition of the book of jonah depends on the most wonderful time luke says jonah sleeps for fight jonathan goes for life and takes a holy life in section three three this version
\end{quote}
\begin{quote}\small
  e existence of the universe is held by many people according to and without personal concern about the future of the universe the aim of the family is a creation of a number of possible worlds including unique events shakti chandra s state gandhbar posits
\end{quote}

\paragraph{Uniform diffusion.}
\begin{quote}\small
  p down as the leader of lady karel shartlov the russian west in one nine nine three and ended soon in gorbachev s farewell to the boy during the dust early spring of one nine nine nine after replacing the chaotic labor strip in one nine nine nine kasparov
\end{quote}
\begin{quote}\small
  shield which was designed by the backcracker see john taylor blind eyed clecks neckble os with the axecoat pins set outside the neckble clerk stand by automatic guards the first background of which was to be extended to the blind sweep strip size mostly a
\end{quote}

\paragraph{Binary absorbing diffusion.}
\begin{quote}\small
  one nine six zero s bill oloyd orca yielded the fairly golden fate setting for a soft lonely appearance with artists in the magazine rogue that good even lee hughes continued the living world bennett s back work following both work that pore brad fox and u
\end{quote}
\begin{quote}\small
  its opposite to the clearing sky due to the dumping of a gase found in exceptionally outstanding appendorenses which combines with the atmosphere such as rydberg thermoboat megasi the temples of samuel m sick and manto benzo by martota sick rock george ler
\end{quote}

\paragraph{Masked diffusion.}
\begin{quote}\small
  rustor that so works usually in any heat production applications but the same application as at closed fuel internal combustion is also less expensive in reducing fuels usually from flowering applications the result is considered by some to be quite genera
\end{quote}
\begin{quote}\small
  sical elements of nature from traditional christian experience two in their nature and helped indicate the effect of so falled jesus came to be found there and like christ with his unmentioned child and the father with the father and divine breat with you
\end{quote}

\subsection{TinyStories}

\paragraph{Binary symmetric diffusion.}
\begin{quote}\small
  Once upon a time, there was a yellow bird. The bird lived in a big tree. The bird had a rag on a high branch. The rag was full of pretty leaves and leaves. The bird liked to play with the rag and the leaves. One day, the bird saw a big, red ball. The bird wanted to play with the ball. But the ball was too big for the bird. The bird was sad and did not want to play with the ball anymore. The bird tried to be brave. It flew down from the tree and got the ball. The bird put the ball on the rag and started to play. Soon, other birds came to play with the yellow bird. They all played together and had fun. The bird was not sad anymore. The bird was happy and made new friends.
\end{quote}
\begin{quote}\small
  Once upon a time, there was a mommy and a little girl. They liked to admire beautiful things like the leaveies, watch the birds and play all kinds of games. Today, they decided to create a special use of nature to keep their attention. The mommy pulled out a special bottle from the table. She gave the little girl a glass with water. It was gentle and gentle, and the little girl watched carefully. The little girl looked closely. She watched the birds soar high in the sky and in different ways. It seemed like it was completely so beautiful to fly. The mommy and the little girl watched in amazement. The little girl smiled, and felt so proud. The birds soared away into the sky, and the little girl was filled with amazement of how she had created so special. From that day onwards, the little girl always remembered to always be gentle with her mommy and the magic of nature.
\end{quote}

\paragraph{Uniform diffusion.}
\begin{quote}\small
  Once upon a time, there was a little girl named Mia. She lived in a small house with her mom, dad, and dog, Poppy. Mia was very excited to play with Poppy. One day, Mia found a piece of paper. She wanted to fold it and make a fun bridge. So, Mia showed the paper to Poppy and said, ``Poppy, let's make a bridge!'' Poppy smiled and wagged her tail. Mia and Poppy folded the paper into many shapes like animals and stars. They made a beautiful flower bridge with paper. Mia and Poppy were so excited about their flower bridge. They played with their flower bridge all day long, having the best time ever together.
\end{quote}
\begin{quote}\small
  Once upon a time, there was a little girl named Sue. Sue loved to play outside with her friends. One day, Sue and her friends were playing in the mud. They were having a great time, but their clothes and feet got dirty. The clothes were dull and not very nice. ``I want to change my game,'' said Sue. She went inside her house and got her wet clothes. She left them in the sun to dry them off. But the clothes were still dirty and Sue left the wet clothes outside. The sun was hot and the clothes became wet and dirty. Sue's mom came to the room and saw the dirty clothes. ``You have to change into clean clothes now,'' said her mom. Sue was sad because she wanted her clothes to be nice. She went back outside to play with her friends and use clean clothes. The day ended badly, and Sue learned that she should listen to her mom next time.
\end{quote}

\paragraph{Binary absorbing diffusion.}
\begin{quote}\small
  Once upon a time, there was a grumpy cat named Tom. He lived in a small house with his friend, a little girl named Sue. They liked to play and eat chores. They were very happy together. One day, Sue found a big chain in the yard. She picked it up and showed it to Tom. ``Look, Tom! I found a chain!'' Sue said. Tom looked at the chain and saw a big door on the other side. It was like a secret door. He liked the chain because it was all by himself! Sue saw Tom looking at the big chain and asked, ``What's wrong, Tom?'' Tom said, ``I don't know. It lost my chair.'' Sue looked at the chain and saw that it was gone. She said, ``Don't worry, Tom. We will find your chain and eat it together!'' So, Tom and Sue looked everywhere for the chair. Finally, they found the chain under the big tree. Tom was so happy! He hugged Sue and said, ``Thank you, Sue! You are the best friend ever!'' From that day on, Tom wasn't grumpy anymore. Sue and Tom played together every day, and they always had lots of fun.
\end{quote}
\begin{quote}\small
  Once upon a time, there was a little dog named Spot. Spot liked to play and run in the park. One day, Spot saw a big tree. The tree had a tie all around it. Spot was very happy and so big. Spot went up to the tree and asked, ``Can I play with you, tree?'' The tree nodded and said, ``Yes, Spot, you can play today.'' Spot was very excited and started to run and jump. After some time, Spot was tired. He sat under the tree and licked his fur. Spot felt very fine and happy. Spot and the tree became good friends. And they played together in the park every day.
\end{quote}

\paragraph{Masked diffusion.}
\begin{quote}\small
  Once upon a time, in a small nation, there lived a cute cat named Lily. Lily had a best friend, a dog named Max. Lily and Max were always happy and liked to play together. One day, they decided that they would scatter pretty flowers all around their town. Lily said to Max, ``Let's make our nation beautiful!'' Max thought it was a great idea. They picked many flowers, brought them to their town, and started scattering them everywhere. Soon, the nation was full of colorful leaves, flowers, and big trees. Lily and Max were very proud of what they did. They learned that when they work together and share, they can create more beauty and turn their nation into a magical place. The moral of the story is that sharing with others can make the world a better place.
\end{quote}
\begin{quote}\small
  Once upon a time, there was a little red car. This car loved to drive fast. One day, the car saw a big, distant hill. The car thought, ``I want to go to the top of that hill.'' The car tried to go up the hill, but it was too hard. The car was sad. It could not drive up the hill. The car said, ``I am sad. I want to go to the top of the hill.'' Then, the car saw a big truck. The truck said, ``Why are you sad, little car?'' The car said, ``I can't go up the big hill.'' The truck wanted to help. It pushed the car to the top of the hill. The car was very happy. Now the car could drive down the hill. The car went fast and had fun. The car and the truck became good friends. They drove every day. They had lots of fun together. The end.
\end{quote}

\subsection{OpenWebText}

\paragraph{Binary symmetric diffusion.}
\begin{quote}\small
  neighborhood neighborhoods. The first houses, in 1926, had mid-size housing shops in San Francisco, California. As San Francisco's population grew, new houses were built. The new houses were in the Chicago district's Southwest in the 1920s and 1930s, with housing in many other neighborhoods added. In the 1930s, many of the new homes opened with mid size housing shops, and, by the late 1940s, many of the middle-size housing shops opened with new housing shops in San Francisco. Old, high-end industrial style houses in Gravestone Bay, shadow shops in Black Rock and Blue Browns Creek were added when packed and cut. These homes were found in many of San Francisco's finest neighborhoods in the 1940s. However, most of these high-end neighborhoods came in locations like Garden Center in the late 1920s. In the twentieth century, residents started in neighborhoods with mid houses, mid-size housing shops, and main housing shops. Some middle-size neighborhoods had mid-size housing shops. Many of these new residents lived in small, mid-size neighborhoods, with high-level houses. By the late 1930s, residents started in housing shops and high-level shops and housing shops. Many of these residents lived in new houses, mostly in low-level neighborhoods, as well as mid-size housing shops. These residents also lived in other neighborhoods with mid-size housing shops, particularly under high-level houses. A few lived in high-level rental houses, high-level housing houses, and even traditional-size housing shops. Most lived in mid-size neighborhoods and other neighborhoods, such as rental houses, rental houses, and high-level homes. Many lived in different residential neighborhoods and some of these neighborhoods had high-level houses, low-level houses, and high-level houses. Many residential neighborhoods also had large houses, particularly in 70-year-old Yellows houses, and some of the large houses and 70-year-old yellows houses all worked in more than one residential neighborhood. Most of them lived in different neighborhoods, such as residential neighborhoods, with the vast majority of these neighborhoods in mid-size houses. Living in all of those neighborhoods, these three were given the equal opportunity to work in that neighborhood. Many of the families that lived in that neighborhood - and the families that lived in that neighborhood - worked in that neighborhood. Many of them worked in that neighborhood, and these ad
\end{quote}
\begin{quote}\small
  2004, the Washington Post reported that the Democratic Party, among Barack Obama's closest allies, is considered by some to be one of the most viable candidates in history. As the bad news of the war in 2005 was going to be considered a delay in a broader U.S. presidential debate, on Wednesday, the Democratic Party again argued that Barack Obama was a visible candidate and urged national solidarity with the Americans who funded the war. Obama, without an explanation, allowed himself to have asked to withdraw his king from the war, the first large-scale Sunni terrorist attack on the United States. In February 2002, he declared Afghanistan the country of the past. A few months later, when Obama was re-elected by George W. Bush and Bill Clinton, he marveled about Afghanistan and his foreign policy for the United States might have nothing to do with the Taliban. He told him the Taliban reignited Kuwait and that a young President Barack Obama would offer something that gave a good clue. As for politicians, their attitude was absolutely clear: Barack Obama was confused. They looked at the courts, remnants of the war, and the thousands left behind. That America is certainly not America at all was not popular or even frustrating; it was the Taliban and radical Islam, which most certainly waged a war across the country. Barack Obama, his widow, son and uncle focused on the war in Afghanistan; there is little to any evidence that he and the Democratic Party presidency made bold decisions beyond the reach of Obama. And, most importantly, Barack Obama was passionately loyal to broadly established foreign policies and generous economic support from unfortunately marginalized democracies, while promising a very tight war in the world and a real war against other regions. And America's foreign policy was balanced by its political hegemony. Obama wasn't ready to test the foreign policy claims of the richest countries. Instead, he helped protect the economic interests of the mainstream economic elites that accused Obama of being liars, despite the passage of the snowcracker law. Iran declared a genocide in East Asia and killed its head of arms so Barack Obama did so. In fact, the United Nations upped the table after Iranians were killed in a clash with the Islamic State in Iraq at the top of the table. Obama was just as keen as the pundits about cooking it up. When it came to warfare, Washington Post reporter Sean Hannity's May 2011 message was like this: In an interview, roughly the one time he was present on the war -- the African drone, Hannity said: 'If you go to Arkansas, you
\end{quote}

\paragraph{Uniform diffusion.}
\begin{quote}\small
  you need guidelines to address problems that many dependent firms still have, as do most other commercial units. You have to keep developers thinking about the situations in which you need them. Business presentations are missing: you have to separate out the readers from the developers. They need help. This is the usual part of one small online thing: compiling readers together means reviewing a post. In order for things to change you have to provide detailed examples of how developers are supposed to be trusted and have perfect levels of trust. And for you to be totally of the mind that this is a legal review process, this will end up being a milestone. First, when it comes to a new business, community, product or business engine, we all need to be willing to take steps. In a sense, ``legal reviewing'' means taking action to develop the industry (e.g. writing services, crafting download quality structures, and more without reading the literature). Secondly, if you are a part of it, you should be willing to speak for what you asked for. For example, emailing accessible developer advice in the middle of the day will condemn the goals and aims of your team in the first place. It's the most academic thing to go along with it however you do it. This is the crucial solution to evolve awareness of what happens in the marketplace. Achieving a progressive business plan by publishing to developers is diminishing the value of what needs to be seen in the marketplace through memory and experience. Every effort is competing with demands, and as long as you have the knowledge that money is after any good, you are living with a long, honest and responsible history of your business. Creating something that makes your business function on the first glance is hard to handle, because you are dealing with a long-term power-shift at a time when credibility is going to fade and it is just a matter of staying benchmark and trust. That's not to say that there aren't global shifts towards new values. The privatization of the business requires removing many of the newly recalled keys. This brings product costs down by orders of magnitude because financial service platforms like Sunstar are, unfortunately, no longer capable. With Sunstar and intrigued by the quite popularity of software, we see financial services fail to attract a reward. They are always being used with new-recalled keys. This impacts a system of more generous reporting system and a commitment to adding more experience in order to lift it to the next level. This is a serious problem for us in the world of the product. In some product projects the user has had a younger product because of a lack of financial protection for the financial position and isn't keeping the job though. Because of fear of an old generation, no rational financial resources can afford to accumulate. Because great projects can re-energy capital, they must be protected by a large corpor
\end{quote}
\begin{quote}\small
  ting to the end... ``I know this group of winners very well. They have shown special interest in how to win.'' They were very emotionally convincing. ``I don't want to die.'' (That's one of the pictures he had written for me. I'm in the full picture.) It wasn't the best time, so I gather and think, ``Want to breathe out here.'' Sure enough, the people there were amazing, including business appearances from all the businessmen and a quote from a magazine. ``I hope I can learn more,'' he says. ``The rest of it is something I'm very excited about, but I also feel that the most important lesson is that you should try to take your time and come home happy.'' ``You don't worry though,'' he says. ``You pay yourself.'' ``It's a great thing to put something on the balance card for your friends, but it's not really an honor.'' ``Any individual who thrives on property must have one of the fastest operating equipment.'' ``There are no prizes,'' he says. ``You play hard, and this is no different. I think they need to have some faith in winning. Unfortunately they have not enough.'' The next morning at eight, Jacob is standing in the presence of my childhood mentor, Dwayne Cullen, shooting a guest skit. He's out of the club as soon as I get to it. ``I mean, I'm not starving, I'm working, and I've got a lot of time. I'm glad that this place is giving me the hope to go,'' I tell him, ``but how do I get it back?'' ``I'm glad that this is my place,'' he says. It's where I want to live.'' He just graduated from New York County College. I show my friend he has to go. But then he has to stop him. I tell him that he has a business deal that can be passed on to the lawyer, that he's looking for a college, and that he's going to be reluctant to make any money. ``Don't worry about anything. I'm not sure who will pay me,'' he says. ``I'd like to have an effective way of owning a shop here, which is another area,'' I remind him. ``I'm not pro-gay. I'm doing what I can to help. Every 50-mile drive from downtown, there's a band in the steel car, and I know my band.'' He intends to go to those, rather than the lawyer, if he pays his way, and doesn't feel guaranteed to be coming back in the future. ``I hate to say that I'm a businessman, but that I'm an economist. That's a lot of work,'' he says. ``The interesting part is that I also have access to schools. There are a lot of schools that are useful. This is a very powerful position for young kids. I have a doctor, and I have friends offer \$10,000 for the uses of their services. They are very prominent to me. They have my best friends as well. You go to something like 6 a.m. a Friday morning, and they're one of the best forms of schools in town in New York, in Brooklyn. And you know what, guess that was just a joke, at one point. Now I can get a slice of my money. It's a b
\end{quote}

\paragraph{Binary absorbing diffusion.}
\begin{quote}\small
  negative conclusions about the risks of climate change or conflicting conclusions about climate change about the relationship between the causes of climate change and rational conclusions about particular risk factors that measure the effect.'' Indeed, research argues that this evidence also supports the belief that climate change is essentially a long-term environmental effect, and evidence from sources such as NASA suggest that global warming is mitigated by temperature reductions and that there is a risk that climate change will increase the long-term impact of global climate change by much lower than the negative, but substantial, long-term long-term effects. Not surprisingly, researchers, however, argue that the effects of global climate change can be substantially restricted and provide evidence for the positive effects of climate change. The research was published in Nature. In December, the researchers produced a report from the EPA that demonstrated that the benefits of climate change are notable everywhere. The study showed that the effects often occur in many areas... it is a matter of ideology that climate change is widespread. The researchers also concluded that the notion that carbon dioxide is causing climate change ``weighs underlying trends.'' ``I did previous research to throw at the problem --- of having ice caps tapping off with carbon dioxide and climate change far above what I thought was, dating back to the 1980s --- based on long-term carbon emissions trends,'' according to the researchers. They included three co-authors, Jeff Deaart and co-authors from University of California, Berkeley and University of New York, Brian Boeggen from California and John King, as well as Anita Panetti from Million Climate Science, University of California at Berkeley, University of Chicago and the Tang Center for Climate Research. Studies in response to the challenges presented by the paper appeared online in 2015 on Fort North. All arguments about the relationship between world levels and temperature alongside the negative effects of global warming are false. The paper was published in the journal Climate Change in January 2015. The researchers conducted a review of the paper and reviewed the paper to see whether the analysis even worked it out. They agreed with their final conclusions and stressed that the paper was published on a major environmental issue. They reviewed the paper in 2013, even though it did not produce conflict and agreed to opt for another review. Lab data The research done by Monica Mroinsky and Chor Camillo was inspired by UCS research that seeks to establish the argument that global climate change is affecting long-ter
\end{quote}
\begin{quote}\small
  ironmental impacts, including the drought and environmental impacts, that have apparent state and local impacts on public health. Hundreds of thousands of people now live in Los Angeles and much of southern Los Angeles. Environmental impacts include environmental impacts that affect the overall health of mostly suburban residents or of those living in Los Angeles County. Even in California, gas, dap oil, and other chemicals are used by residents of San Diego County on the impact of climate change. For instance, in New York City, biofuels emit more than 100\% of the carbon dioxide level in half of California's residents. Hundreds of thousands of people have reported a variety of environmental impacts. The California State Environmental Protection Act recognizes both water pollution in parts of Los Angeles and environmental impacts in Los Angeles County, as well as at some rates in California. This is also in line with other environmental impacts that have occurred in California. In Texas alone, one water tank was used to help reduce carbon dioxide emissions from coal reserves, a step seen by Hurricane Katrina. California has 69\% of normal carbon dioxide emissions from nuclear power plants, and 83 percent of all chemicals made from coal plants are used to store carbon dioxide. More than a third is from gas, and about a third is from New York. In California, a new study is about to be updated, led by Michael Keays, a candidate for Los Angeles Climate Change in California and a professor of public relations at the University of California, San Diego. The residents of Los Angeles County are particularly dismayed by the study in neighboring areas. In Los Angeles, former Assistant U.S. Attorney Michael Brown of the Los Angeles County Department of Public Health Services conducted a new study of environmental impacts in Los Angeles. However, the study did not disclose how much relative harm occurred in suburban areas or neighborhoods impacted by conditions. In Los Angeles, the study also shows that the most important environmental impact impact in certain parts of California is being south of California, not that of southern California. In parts of the county, areas with the largest environmental impacts may play a role in global warming trends. Along with the impacts in areas with the most environmental impacts, more recent city data show that many of the environmental impacts under study could occur elsewhere in Los Angeles County. As a result, the environmental impacts are expected to escalate more widely in parts of California, stretching over mr
\end{quote}

\paragraph{Masked diffusion.}
\begin{quote}\small
  vailable more resources in the country, by giving them to the environment. Either those things make sense to the people that competing borders are essential priorities for an economy that is worth much more than just the environmental economy. There are a little more than half million unemployed, 300,000 in Japan who know they can cause all kinds of pain for themselves. To take them off the maternity care, a family of four people who are ultimately afflicted by homelessness, is a hopeless benefit for the middle families and something they will lose to protect their children and families. But the men gone on this promise will make gains wherever they can, get them the jobs where they can, and protect their dignity. Unfortunately, nobody thinks Japan wants to make it any better to begin separating their well-being from their well-being. These people have lost any interest in being exactly hybrid cultures. Small firms in Japan have adopted the notion that independent manufacturing is the main reason to shape insecurity and advertise for valuable resources over the next two and half centuries. Japan applys this idea to an area where wages are poorer for the unskilled workers in rural districts, as well as in a larger economy which has a large workforce and limited scope to shift wealth from the small economy to rural economy, as well as the prospect of greater political and economic prosperity. Tokyo has begun to seize on its environmentalist socialism, in the face of continuing government medical challenge, and has a largely love affair with the automotive industry. An estimated 90 percent of the population of Japan has far more good and profit to provide than it has elsewhere. Today, the government solutions to cheap and cost-cutting business/welfare etc. that encourages the movement of labor has not proven to be suitable for big business. And the Fukuram faction of course has diversity and people are not clinging to their approval. At the same time, there is a significant distinction between market ``property rights'' and actual private rights. The market system's affects people, and to stricture people in this way is too little, in part because people are so desperate to be allowed to have the protection of bigger recipients. These risks are evident in the overarching values of socialization: are we humans based on mutual forces and to beyond the racial fabric of the universe. Ministers are encouraged that there seems nothing left. Rather, they say that they aren't threatening to humans, which, as Mone highlights, they are real risks and exemptions from Japan. Its permanent ``cultural order'' will change the way people live. They argue that they prefer to benefit to themselves, they are increasingly concerned that those who did not have rights to what had been 'personal and taxpayers' land' will be judged
\end{quote}
\begin{quote}\small
  visit. Frankenstein Night Races The story of the Boxhorns: A.J., who's on a hard run to return to a normal life. He leaves his brother's field, and easily recovers his friendship with Lucky Clark. Frankenstein Night is a movie about A.J. looking for a living as a sophomore. A.J. wants to spend money to build a team and believes he's capable of getting over there in two years. A.J. decides to return to Thailand and finish the season. In the beginning, he had a crazy time, but he doesn't want to come back. Frankenstein was once a piece of history which, after its time of racing culminates to this day. August Perriles, the best runner from 1984-1988. Another genius TBP-man. Perriles, the greatest fantasy circuit in history, eventually cracks the heart of the 2012 Japanese movie world in a triple scene. The Mega machine gets back delightfully, carving out the longevity of having got a car, and not feeling hungry because the last of the money flows away from the car. He gets busy being silly, with a car that once had his money. With the TBP riddle, he wants to put on a movie that mars the lines of the hot sport racing landscape and the spirit of the circuit. 9. JFR (1984) (worked by Don Beck) Richard Duffy: Distance, 1,727.35 km Weight: 415 lb in length, two cranes, legs: 60,032. After years of drawing and racing racing vehicles on a bus (and Duffy still dreams about sushi) he finally began playing his own game of magic character vehicles from 1983 to 1986. The game expired in the mid 80s and has a unique timeline of 1987. The early 90s' racing scene saw a lot of fantasy racers return, until the dangerous Japanese hoarders arrived. Jay-Hirst : Height: 4.21 mm Weight: 110 lb in length. A movie about a hoarder built over 100 miles. 10. Stadio Rueville: Miles 1985 to 2000. Before 1984 non-Japanese circuit. A Spanish woman perches up in a warehouse crying in the roadside warehouse, where dramatic dynamic circuitry is so dangerous, the wind sounds them into rags. JFR drops a mythical mountain of creatures watching the bus. JFR decides to begin the one-shot. 1985 Halle: Distance, 29.7 km Weight: L, bike ride: 1.8 meters. Saga is quite something, for what it's worth, repetitively fun. Jeffrey Carwich, along with Samuel Zeiss, dabbles in being the world's greatest
\end{quote}

\end{document}